\def\eqref#1{equation~\ref{#1}}
\def\1{\bm{1}}
\DeclareMathAlphabet{\mathsfit}{\encodingdefault}{\sfdefault}{m}{sl}
\SetMathAlphabet{\mathsfit}{bold}{\encodingdefault}{\sfdefault}{bx}{n}
\DeclareMathOperator*{\argmin}{arg\,min}
\let\AND\relax
\newtheorem{theorem}{Theorem}[section]
\newtheorem{lemma}[theorem]{Lemma}
\newtheorem{remark}[theorem]{Remark}
\title{Adversarial Bandits Against Arbitrary Strategies}
\author{
\hspace*{-0.35em}\name Jung-hun Kim \email junghun.kim@ensae.fr \\
\addr CREST, ENSAE, IP Paris \\
FairPlay joint team, France\vspace{3mm}
\\
\AND
\name Se-Young Yun \email yunseyoung@kaist.ac.kr \\
\addr KAIST AI,
South Korea
}
\begin{document}

\maketitle

\begin{abstract}
We study the adversarial bandit problem against arbitrary strategies, where the difficulty is captured by an unknown parameter $S$, which is the number of switches in the best arm in hindsight. To handle this problem, we adopt the master-base framework using the online mirror descent method (OMD). We first provide a master-base algorithm with simple OMD, achieving $\tilde{O}(S^{1/2}K^{1/3}T^{2/3})$, in which $T^{2/3}$ comes from the variance of loss estimators. To mitigate the impact of the variance, we propose using adaptive learning rates for OMD and achieve $\tilde{O}(\min\{\sqrt{SKT\rho},S\sqrt{KT}\})$, where $\rho$ is a variance term for loss estimators.
\end{abstract}
\section{Introduction}

The bandit problem is a fundamental framework in sequential decision-making that addresses the exploration-exploitation trade-off. At each time step, an agent selects an action (referred to as an arm) and observes a corresponding reward or loss. In applications such as recommendation systems, arms may represent items presented to users, and user preferences can evolve over time. This dynamic nature can be modeled by allowing the identity of the best arm to change, leading to the notion of switching best arms.

To capture such evolving environments, it is natural to consider performance against a changing sequence of optimal arms rather than a single fixed one.  
% switching best arms
%  or even is not fixed in advance so that algorithms do not know the parameter $S$ for the switchs. In such a case, algorithms need to utilize a prediction value $L$ for an unknown $S$ (\cite{auer2002using}) and their performance highly depends on the accuracy of $L$. \textit{Therefore, algorithms must be robust to the inaccuracy of $L$, and this implies that they must be robust to any switch parameter $S$ with a fixed $L$.}
The problem of competing against switching arms has been extensively studied. In the expert setting with full-information feedback \citep{cesa1997use}, several algorithms \citep{daniely2015strongly, jun2017improved} have been developed that achieve a near-optimal regret bound of $\tilde{\mathcal{O}}(\sqrt{ST})$ for $S$-switch regret (formally defined later), without requiring prior knowledge of the number of switches $S$. In contrast, the bandit setting presents a greater challenge, as the agent only observes the feedback for the selected arm rather than the full loss vector, making the problem significantly harder than in the full-information case.
 
In the stochastic bandit setting where each arm's reward distribution may change over time, commonly referred to as the non-stationary bandit problem, several studies have addressed the challenge of switching environments \citep{garivier2008upperconfidence,auer2019adaptively,russac2019weighted,suk2022tracking}. Notably, \citet{auer2019adaptively} and \citet{suk2022tracking} achieved near-optimal regret bounds of $\tilde{O}(\sqrt{SKT})$ without requiring prior knowledge of the number of switches $S$. However, these methods rely on stochastic assumptions and are not applicable in the adversarial setting, where losses can be chosen arbitrarily. For the adversarial bandit setting, 
% {as summarized in Table~\ref{tab:com}},
the \texttt{EXP3.S} algorithm \citep{auer2002nonstochastic} achieves a regret bound of $\tilde{O}(\sqrt{SKT})$, but assumes that $S$ is known in advance. When $S$ is unknown, the Bandit-over-Bandit (\texttt{BOB}) approach has been proposed, achieving a regret bound of $\tilde{O}(\sqrt{SKT} + T^{3/4})$ \citep{cheung2019learning,foster2020open}. More recently, \citet{luo2022corralling} studied the problem of switching adversarial linear bandits and achieved a regret of $\tilde{O}(\sqrt{dST})$ under the assumption that $S$ is known.

In this paper, we study adversarial bandit problems against arbitrarily switching arms. Crucially, we allow the number of switches $S$ to be unknown to the agent, thereby targeting arbitrary strategies without prior knowledge of their complexity. To address this setting, we adopt the master-base framework combined with the online mirror descent (OMD) method, inspired by \cite{agarwal2017corralling, pacchiano2020model, luo2022corralling}. We begin by analyzing a master-base algorithm that employs OMD with a negative entropy regularizer, and show that it achieves a regret bound of $\tilde{O}(S^{1/2}K^{1/3}T^{2/3})$. However, this approach relies on a fixed learning rate, which limits its ability to adapt to the variance of the loss estimators, leading to a suboptimal regret term proportional to $T^{2/3}$.
 
Building on this analysis, we propose using adaptive learning rates within the OMD framework to better control the variance of loss estimators. This refinement leads to an improved regret bound of $\tilde{O}(\min\{\sqrt{SKT\rho}, S\sqrt{KT}\})$ with respect to $T$, where $\rho$ captures the variance associated with a comparator strategy. Crucially, rather than employing the standard negative entropy regularizer, we adopt a log-barrier regularizer, which enables tighter control over worst-case scenarios in terms of the variance term $\rho$. 

% Finally, we empirically evaluate our proposed algorithms and compare their performance against prior approaches, including those of \citet{auer2002nonstochastic} and \citet{cheung2019learning}.

%  Further, according to Theorem 2 in \cite{auer2002using}, the author suggests that using a prediction value $L$ for the unknown $S$, \texttt{EXP3.S} (or \texttt{switchBAND}) achieves a regret bound of $\tilde{\mathcal{O}}((\sqrt{L}+S/\sqrt{L})\sqrt{KT})$. The regret bound of \texttt{EXP3.S} for the unknown $S$, $\tilde{\mathcal{O}}(S\sqrt{KT})$ in \cite{auer2002nonstochastic}, is the result for the case when $L=1$ which represents the worst-case of $L$ (the case of $L=T$ also represents the worst-case but it has a trivial bound linear to $T$). 

\section{Problem Statement} 
We now formalize the problem setting. Let $\mathcal{A} = [K]$ denote the set of $K$ arms, and let $\boldsymbol{l}_t \in [0,1]^{K}$ be the loss vector at time $t$, where $l_t(a)$ denotes the loss incurred by arm $a \in [K]$ at time $t$. The environment is adversarial, generating an arbitrary sequence of loss vectors $\boldsymbol{l}_1, \boldsymbol{l}_2, \dots, \boldsymbol{l}_T \in [0,1]^K$ over a time horizon of $T$ rounds. At each round $t \in [T]$, the agent selects an arm $a_t \in [K]$ and observes only the loss $l_t(a_t)$ of the chosen arm. Our goal is to minimize the $S$-switch regret, which measures the performance gap between the agent and the best sequence of actions that switches arms at most $S$ times.

Formally, let $\kappa = \{\kappa_1, \kappa_2, \dots, \kappa_T\} \in [K]^T$ denote a sequence of comparator actions. For a positive integer $S < T$, define the set of sequences with at most $S$ switches as \[B_S=\left\{\kappa\in [K]^T:\sum_{t=1}^{T-1}\mathbbm{1}\{\kappa_t\neq \kappa_{t+1}\}\le S \right\}.\] Then, the $S$-switch regret is defined by
 \[ R_S(T)=\max _{\kappa\in B_S}\sum_{t=1}^{T}\mathbb{E}[\textit{\textbf{l}}_t(a_t)]-\textit{\textbf{l}}_t(\kappa_t).\]

We consider the setting where the number of switches $S$ is unknown to the agent (i.e., not provided in advance). Our goal is to design algorithms that perform well against any sequence of actions, without relying on prior knowledge of $S$. This requires the development of universal algorithms that achieve tight regret bounds uniformly over all values of $S \in [T-1]$, where $S$ characterizes the hardness of the problem \citep{auer2002nonstochastic}. Notably, this setting generalizes non-stationary stochastic bandit problems in which the switching parameter is unknown \citep{auer2019adaptively, chen2019new}.

% \subsection{Regret Lower Bound}\label{sec:lowerbound}
% A regret lower bound for this problem can be readily derived from the classical lower bound for adversarial bandits. Let $t_s$ denote the time step at which the $s$-th switch in the optimal sequence (in hindsight) occurs, for $s \in [S]$, with $t_0 = 1$ and $t_{S+1} - 1 = T$. Assume that the intervals between switches are evenly spaced over the time horizon, i.e., $T_s := t_{s+1} - t_s = \Theta(T/S)$ for all $s \in [0, S]$. From Theorem 5.1 of \citet{auer2002nonstochastic}, the minimax regret over each interval of length $T_s$ is at least $\Omega(\sqrt{K T_s})$. Summing over all $S+1$ intervals yields the overall regret lower bound: \[R(T)=\Omega\left(\sum_{s\in[0,S]}\sqrt{ KT_s}\right)=\Omega\left(\sqrt{SKT}\right).\]
% However, whether a tighter lower bound can be established in the setting where $S$ is unknown to the agent remains an open question.

\section{Algorithms and Regret Analysis}

To address the problem of adversarial bandits with an unknown switching budget $S$, we propose algorithms based on the master-base framework combined with the online mirror descent (OMD) method.

\subsection{Master-Base Framework}
In the master-base framework, a master algorithm selects one among several base algorithms at each round, and the selected base then chooses an arm to play. Since the true switch parameter $S$ is not known in advance, we instantiate each base algorithm with a different candidate value of $S$ from a predefined set.

Let $\mathcal{H}$ denote the set of candidate values for $S$, defined as: \[\mathcal{H}=\{T^0,T^{\frac{1}{\lceil \log T \rceil}},T^{\frac{2}{\lceil \log T \rceil}},\dots,T\}.\] Each base is associated with a candidate $h \in \mathcal{H}$ and tunes its learning rate accordingly. Let $H = |\mathcal{H}| = O(\log T)$ be the number of base algorithms. When there is no ambiguity, we refer to a base instantiated with parameter $h \in \mathcal{H}$ simply as base $h$. Define $h^\dagger \in \mathcal{H}$ as the largest candidate not exceeding the true (unknown) value of $S$: $h^\dagger=\max\{h\in \mathcal{H}:h\le S\}$. By construction, this ensures
\begin{align*}
e^{-1}S\le h^\dagger\le S,
\end{align*}
which guarantees that $h^\dagger$ provides a near-optimal approximation of $S$.

\subsection{{Online Mirror Descent (OMD)}}

{We now present the \emph{Online Mirror Descent} (OMD) method~\citep{lattimore2020bandit}, which serves as the fundamental update rule for the master and each base algorithm in our framework. OMD generalizes classic online gradient descent by incorporating a flexible geometry through a regularizer.}

Let $F: \mathbb{R}^d \to \mathbb{R}$ be a convex and differentiable regularizer function. This regularizer defines the \emph{Bregman divergence} between two points $\boldsymbol{p}, \boldsymbol{q} \in \mathbb{R}^d$:
\[
D_F(\boldsymbol{p}, \boldsymbol{q}) = F(\boldsymbol{p}) - F(\boldsymbol{q}) - \langle \nabla F(\boldsymbol{q}), \boldsymbol{p} - \boldsymbol{q} \rangle.
\]
In our context, the decision variable is a probability distribution $\boldsymbol{p}_t$ over $d$ arms, i.e., $\boldsymbol{p}_t \in \mathcal{P}_d$ where $\mathcal{P}_d$ denotes the $d$-dimensional probability simplex. At each round $t$, given a loss vector $\boldsymbol{l} \in \mathbb{R}^d$, OMD computes the next distribution by solving the following optimization problem:
\begin{align}
\boldsymbol{p}_{t+1} 
= \argmin_{\boldsymbol{p} \in \mathcal{P}_d} 
\Big\{
\langle \boldsymbol{p}, \boldsymbol{l} \rangle + D_F(\boldsymbol{p}, \boldsymbol{p}_t)
\Big\}.
\label{eq:omd_1}
\end{align}
{This formulation reflects the trade-off between exploiting current loss information (via the linear term) and staying close to the previous distribution (via the Bregman divergence).} 

{In practice, the update of~\eqref{eq:omd_1} is often implemented in two steps for computational convenience:}
{
\[
\begin{aligned}
\tilde{\boldsymbol{p}}_{t+1}
&= \argmin_{\boldsymbol{p} \in \mathbb{R}^d}
\Big\{
\langle \boldsymbol{p}, \boldsymbol{l} \rangle + D_F(\boldsymbol{p}, \boldsymbol{p}_t)
\Big\}, 
\\
\boldsymbol{p}_{t+1}
&= \argmin_{\boldsymbol{p} \in \mathcal{P}_d}
D_F(\boldsymbol{p}, \tilde{\boldsymbol{p}}_{t+1}).
\end{aligned}
\label{eq:omd}
\]}
{
Here, the first step performs an unconstrained update in the dual space defined by $F$, while the second step projects the intermediate point back onto the simplex, ensuring that the updated distribution is a valid probability vector. The choice of regularizer $F$ determines the specific algorithm instance; it typically includes a learning rate parameter that controls the step size and will be specified in subsequent sections.}

{Finally, in the bandit setting, since the learner does not observe the full loss vector $\boldsymbol{l}_t$ but only the loss of the selected arm, the true loss must be replaced with an appropriate unbiased estimator. This ensures that OMD remains applicable even with partial feedback.}

\subsection{Master-Base OMD}
{We employ an OMD framework with a hierarchical structure consisting of a master and multiple bases.} We first present a simple Master-Base OMD algorithm (Algorithm~\ref{alg:alg1}) that employs the negative entropy regularizer: \[F_{\eta}(\textit{\textbf{p}})=(1/\eta) \sum_{i=1}^d(p(i)\log p(i)-p(i)),\] where $\textit{\textbf{p}}\in\mathbb{R}^d$, $p(i)$ denotes the $i$-index entry for $\textit{\textbf{p}}$, and {$\eta$ is a learning rate}. This regularizer is commonly used in adversarial bandit algorithms, including the well-known \texttt{EXP3} algorithm \citep{auer2002nonstochastic}.

In Algorithm~\ref{alg:alg1}, at each round $t$, the master algorithm selects a base $h_t \in \mathcal{H}$ according to a probability distribution $\boldsymbol{p}_t$. The selected base $h_t$ then chooses an arm $a_t \in [K]$ using its internal distribution $\boldsymbol{p}_{t,h_t}$ and observes the incurred loss $l_t(a_t)$. From this partial feedback, we construct unbiased estimators: $l_t^\prime(h)$ estimates the loss for each base $h \in \mathcal{H}$, and $l_{t,h}^{\prime\prime}(a)$ estimates the loss for each arm $a \in [K]$ under base $h$. Using these estimators, the algorithm updates both the master distribution $\boldsymbol{p}_{t+1}$ and each base’s distribution $\boldsymbol{p}_{t+1,h}$ via OMD.

To control variance in the estimator $l_t^\prime(h) = l_t(a_t)\mathbbm{1}(h=h_t)/p_t(h)$, we define the master’s update domain as a clipped probability simplex $\mathcal{P}_{H}^\alpha=\mathcal{P}_{H}\cap [\alpha,1]^{H}$ 
for some $\alpha > 0$. This clipping ensures that each base is selected with non-negligible probability, preventing high variance in the importance-weighted loss estimates. Within this domain, the update of $\boldsymbol{p}_{t+1}$ is then computed using the negative entropy regularizer with learning rate $\eta$.

Each base $h \in \mathcal{H}$ maintains its own arm selection distribution $\boldsymbol{p}_{t,h}$. The update for this distribution is also based on the negative entropy regularizer but with a learning rate that adapts to the candidate switch parameter $h$: $${{\xi(h)}=\frac{h^{1/2}}{K^{1/3}T^{2/3}}.}$$ This choice of $\xi(h)$ helps base $h$ adapt to environments with up to $h$ switches in the best arm sequence. The domain for updating $\boldsymbol{p}_{t,h}$ is
$\mathcal{P}_{K}^\beta=\mathcal{P}_{K}\cap [\beta,1]^{K}$ for $\beta>0$. Unlike $\alpha$, which controls variance in master-level estimation, $\beta$ acts as a regularization mechanism to stabilize learning under the switching best arms in hindsight.

% For getting $\textit{\textbf{p}}_{t+1}$, it uses the negative entropy regularizer with learning rate $\eta$.
% The domain for updating the distribution for selecting a base is defined as a clipped probability simplex such that $\mathcal{P}_{H}^\alpha=\mathcal{P}_{H}\cap [\alpha,1]^{H}$ for $\alpha>0$.
% By introducing $\alpha$, it can control the variance of estimator $l_t^\prime(h)=l_t(a_t)\mathbbm{1}(h=h_t)/p_t(h)$ by restricting the minimum value for $p_{t}(h)$.  For getting $\textit{\textbf{p}}_{t+1,h}$, it also uses the negative entropy regularizer with learning rates depending on a value of $h$ for each base. The learning rate $\xi(h)$ is tuned by using a candidate value $h$ for $S$ in the base $h$ to control adaptation for switching such that $$\xi(h)=h^{1/2}/(K^{1/3}T^{2/3}).$$
% The domain for the distribution is also defined as a clipped probability simplex such that $\mathcal{P}_{K}^\beta=\mathcal{P}_{K}\cap [\beta,1]^{K}$ for $\beta>0$. The purpose of $\beta$ is to introduce some regularization in learning $\textit{\textbf{p}}_{t+1,h}$ for dealing with switching best arms in hindsight, which is slightly different from the purpose of $\alpha$. 

\begin{algorithm}
\caption{Master-base OMD}
\label{alg:alg1}
\begin{algorithmic}
% \STATE Given: $\pi_0$ and $\tau$.
\STATE \textbf{Input:} $T$, $K$, $\mathcal{H}$. 
\STATE \textbf{Initialization:} $\alpha=K^{1/3}/(T^{1/3}H^{1/2})$, $\beta=1/(KT)$, $\eta=1/\sqrt{TH}$, $\xi(h)=h^{1/2}/(K^{1/3}T^{2/3})$, $p_1(h)=1/H$, $p_{1,h}(a)=1/K$ for $ h\in \mathcal{H}$ and $a\in [K]$.
\FOR{$t=1,\dots,T$}
\STATE \textbf{Select a base and an arm}:
\STATE Draw $h_t\sim$ probabilities $\{p_t(h)\}_{h \in \mathcal{H}}$.
\STATE Draw $a_{t,h_t}\sim$ probabilities  $\{p_{t,h_t}(a)\}_{a \in [K]}$.
\STATE Pull $a_t=a_{t,h_t}$ and  Receive $l_t(a_{t,h_t})\in [0,1]$.
\STATE \textbf{Obtain loss estimators}:
\STATE$l_t^\prime(h_t)=\frac{l_t(a_{t,h_t})}{p_t(h_t)}$ and $l_t^\prime(h)=0$ for $h\in \mathcal{H}/\{h_t\}$. 
\STATE$l_{t,h_t}^{\prime\prime}(a_{t,h_t})=\frac{l_t^\prime(h_t)}{p_{t,h_t}(a_{t,h_t})}$ and $l_{t,h}^{\prime\prime}(a)=0$ for  $h\in \mathcal{H}/\{h_t\}$, $a\in [K]/\{a_{t,h_t}\}$.

\STATE \textbf{Update distributions}:
\STATE$\textit{\textbf{p}}_{t+1}=\arg\min_{\textit{\textbf{p}}\in\mathcal{P}_{H}^\alpha}\langle \textit{\textbf{p}},\textit{\textbf{l}}_t^\prime\rangle+D_{F_\eta}(\textit{\textbf{p}},\textit{\textbf{p}}_t)$ 
\STATE$\textit{\textbf{p}}_{t+1,h}=\arg\min_{\textit{\textbf{p}}\in\mathcal{P}_{K}^\beta}\langle \textit{\textbf{p}},\textit{\textbf{l}}_{t,h}^{\prime\prime}\rangle+D_{F_{\xi(h)}}(\textit{\textbf{p}},\textit{\textbf{p}}_{t,h})$ for all $h\in\mathcal{H}$ 
\ENDFOR
\end{algorithmic}
\end{algorithm}
Now we provide a regret bound for the algorithm in the following theorem.
\begin{theorem}\label{thm:1}
For any switch number $S\in[T-1]$, Algorithm~\ref{alg:alg1} achieves a regret bound of
\begin{align*}
    R_S(T)=\tilde{O}(S^{1/2}K^{1/3}T^{2/3})
\end{align*}
\end{theorem}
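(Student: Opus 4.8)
The plan is to split the $S$-switch regret through the near-optimal base $h^\dagger$ and bound the two resulting pieces separately. Writing $\bar l_t(h)=\langle \textit{\textbf{p}}_{t,h},\textit{\textbf{l}}_t\rangle$ for the conditional expected loss of base $h$, letting $\bar{\textit{\textbf{l}}}_t$ be the vector with these entries, and letting $\sigma^\star\in B_S$ be the maximizing comparator sequence, I would decompose
\begin{align*}
R_S(T)=\underbrace{\sum_{t}\mathbb{E}\langle \textit{\textbf{p}}_t-e_{h^\dagger},\bar{\textit{\textbf{l}}}_t\rangle}_{\text{master regret}}+\underbrace{\sum_{t}\mathbb{E}\langle \textit{\textbf{p}}_{t,h^\dagger}-e_{\sigma^\star_t},\textit{\textbf{l}}_t\rangle}_{\text{base regret}}.
\end{align*}
Both estimators are conditionally unbiased, $\mathbb{E}[l_t'(h)\mid\mathcal{F}_{t-1}]=\bar l_t(h)$ and $\mathbb{E}[l_{t,h^\dagger}''(a)\mid\mathcal{F}_{t-1}]=l_t(a)$, which is what lets me analyze each piece as a deterministic OMD run on the estimated losses and then take expectations. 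Crucially, base $h^\dagger$ is updated every round (line 13), so $\textit{\textbf{p}}_{t,h^\dagger}$ is a genuine OMD trajectory on $\textit{\textbf{l}}_{t,h^\dagger}''$ regardless of which base the master samples.

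For the master regret I would apply the standard negative-entropy (EXP3-type) OMD bound over the $H$ bases, competing against the point of $\mathcal{A}_{\mathcal{H}}$ closest to $e_{h^\dagger}$. This yields three contributions: the range term $\log H/\eta$, the stability term $\tfrac{\eta}{2}\sum_t\sum_h p_t(h)\,l_t'(h)^2$, and the clipping bias from replacing $e_{h^\dagger}$ by its clipped surrogate. Using $\mathbb{E}[\sum_h p_t(h)l_t'(h)^2\mid\mathcal{F}_{t-1}]\le H$ and $\mathbb{E}[l_t'(h)\mid\mathcal{F}_{t-1}]\le1$, these become $O(\log H/\eta)$, $O(\eta HT)$ and $O(\alpha HT)$ in expectation; plugging in $\eta=1/\sqrt{TK}$ and $\alpha=K^{1/3}/(T^{1/3}H^{1/2})$ makes the dominant piece $O(\alpha HT)=\tilde O(K^{1/3}T^{2/3})$, which is within the claimed bound.

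The heart of the argument is the base regret, which is a tracking/switching regret for negative-entropy OMD with clip level $\beta$ and rate $\eta(h^\dagger)$. I would use the one-step OMD inequality with the time-varying comparator $u_t=\tilde e_{\sigma^\star_t}$ (the clipped vertex in $\mathcal{A}_{\mathcal{K}}$) and sum it, so that the divergence terms telescope within each of the at most $S+1$ constant segments of $\sigma^\star$ and leave a ``comparator-motion'' cost of at most one $D_{F_{\eta(h^\dagger)}}(u_t,\textit{\textbf{p}}_{t,h^\dagger})$ per switch. Because $p_{t,h^\dagger}(a)\ge\beta$, each such divergence equals $\mathrm{KL}(\tilde e_a\|\textit{\textbf{p}}_{t,h^\dagger})/\eta(h^\dagger)\le \log(1/\beta)/\eta(h^\dagger)$, giving total motion cost $O\big((S+1)\log(1/\beta)/\eta(h^\dagger)\big)$. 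The stability term is $\tfrac{\eta(h^\dagger)}{2}\sum_t \mathbb{E}[\sum_a p_{t,h^\dagger}(a)\,l_{t,h^\dagger}''(a)^2]$; taking expectation over the draws of $h_t$ and $a_t$ gives per-round variance $\tfrac{1}{p_t(h^\dagger)}\sum_a l_t(a)^2\le K/\alpha$, so this term is $O(\eta(h^\dagger)KT/\alpha)$, while the clipping bias from $e_{\sigma^\star_t}\to\tilde e_{\sigma^\star_t}$ is $O(\beta KT)=O(1)$. Finally I would invoke $e^{-1}S\le h^\dagger\le S$ to replace $h^\dagger$ by $S$: with $\eta(h^\dagger)=(h^\dagger)^{1/2}/(K^{1/3}T^{2/3})$ both the motion cost and the stability term collapse to $\tilde O(S^{1/2}K^{1/3}T^{2/3})$, matching the theorem.

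The main obstacle I anticipate is at the base level, specifically the coupling of the two clip levels: the variance of $l_{t,h^\dagger}''$ inflates by the master factor $1/p_t(h^\dagger)\le 1/\alpha$, so the base stability cost depends on the \emph{master} clip $\alpha$, whereas the motion cost depends on $\beta$ and $\eta(h^\dagger)$. Verifying that the specific choices $\alpha=K^{1/3}/(T^{1/3}H^{1/2})$, $\beta=1/(KT)$ and $\eta(h)=h^{1/2}/(K^{1/3}T^{2/3})$ simultaneously balance the motion cost $(S{+}1)\log(1/\beta)/\eta(h^\dagger)$ against the inflated stability cost $\eta(h^\dagger)KT/\alpha$ — both at $\tilde O(S^{1/2}K^{1/3}T^{2/3})$ — is the delicate bookkeeping, together with justifying the local-norm stability bound for the clipped, projected OMD iterates (where the generalized Pythagorean inequality ensures the projection onto $\mathcal{A}_{\mathcal{K}}$ only helps).
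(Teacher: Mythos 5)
Your proposal is correct and follows essentially the same route as the paper: the same decomposition of $R_S(T)$ through the near-optimal base $h^\dagger$, the same per-segment tracking analysis for the base (diameter $\log(1/\beta)/\eta(h^\dagger)$ per switch from the clipped simplex, stability inflated by $1/p_t(h^\dagger)\le 1/\alpha$), and the same standard clipped-OMD bound for the master, with identical balancing of the parameters. The only cosmetic differences are that you telescope against a time-varying comparator rather than restarting the OMD bound at each segment boundary (equivalent bookkeeping), and your master stability term $O(\eta HT)$ is in fact the correct form of what the paper writes as $\eta TK/2$; neither affects the final $\tilde{O}(S^{1/2}K^{1/3}T^{2/3})$ bound.
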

\begin{proof}[Proof Sketch] Here, we provide a proof sketch, and the full version is provided in Appendix~\ref{app:thm1}.

{In our proof, we decompose the regret into two parts: one is the regret from the master selecting a base at each time, and the other is the regret from the base selecting an arm at each time.} Let $t_s$ be the time when the $s$-th switch of the best arm happens and $t_{S+1}-1=T$, $t_0=1$. Also let $t_{s+1}-t_s=T_s$. For any $t_s$ for all $s\in[0,S]$, the $S$-switch regret can be expressed as 
\begin{align}
R_S(T)&=\sum_{t=1}^T\mathbb{E}\left[l_t(a_t)\right]-\sum_{s=0}^S\min_{k_s\in[K]}\sum_{t=t_s}^{t_{s+1}-1}l_t(k_s)\cr&=\sum_{t=1}^T\mathbb{E}\left[l_t(a_{t, h_t})\right]-\sum_{t=1}^T\mathbb{E}\left[l_t(a_{t, h^\dagger})\right]+\sum_{t=1}^T\mathbb{E}\left[l_t(a_{t, h^\dagger})\right]-\sum_{s=0}^S\min_{k_s\in[K]}\sum_{t=t_s}^{t_{s+1}-1}l_t(k_s),\label{eq:1_decom}
\end{align}
{in which the first two terms are closely related with the regret from the master algorithm against the near optimal base $h^\dagger$, and the remaining terms are related with the regret from $h^\dagger$ base algorithm against the best arms in hindsight.} We note that the algorithm does not need to know $h^\dagger$ in prior and $h^\dagger$ is brought here only for regret analysis.

\paragraph{{Regret from the near-optimal base.}} First we provide a bound for the following regret from base $h^\dagger$:  \begin{align*}
    \sum_{t=1}^T\mathbb{E}\left[l_t(a_{t, h^\dagger})\right]-\sum_{s=0}^S\min_{k_s\in[K]}\sum_{t=t_s}^{t_{s+1}-1}l_t(k_s),
\end{align*} 
where the first term is the loss from the base and the second one is the loss from the optimal arm in hindsight.
% By using the first-order opitmality condition for $\textit{\textbf{p}}_{t+1}$ and definition of the Bregman divergence, for any $\textit{\textbf{p}}\in\mathcal{P}_{K}^\beta$ we can show that 
% \begin{align}
%     &\langle \textit{\textbf{p}}_{t+1,h^\dagger}-\textit{\textbf{p}},\bm{l}_{t,h^\dagger}''\rangle\cr &\le D_F(\textit{\textbf{p}},\textit{\textbf{p}}_{t,h^\dagger})-D_F(\textit{\textbf{p}},\textit{\textbf{p}}_{t+1,h^\dagger})-D_F(\textit{\textbf{p}}_{t+1,h^\dagger},\textit{\textbf{p}}_{t,h^\dagger}).\cr
% \end{align}
Let $k_s^*=\arg\min_{k\in[K]}\sum_{t=t_s}^{t_{s+1}-1}l_t(k)$ and $\textit{\textbf{e}}_{j,K}$ denote the unit vector with 1 at $j$-index and $0$ at the rest of $K-1$ indices.
% By following the proof of Theorem 31.1 in \cite{lattimore2020bandit}
Then, we have
\begin{align}
\sum_{t=t_s}^{t_{s+1}-1}\mathbb{E}\left[l_t(a_{t, h^\dagger})-l_t(k_s^*)\right]\le \beta T_s(K-1)+\max_{\textit{\textbf{p}}\in\mathcal{P}_{K}^\beta }\mathbb{E}\left[\sum_{t=t_s}^{t_{s+1}-1}\langle \textit{\textbf{p}}_{t,h^\dagger}-\textit{\textbf{p}},{\bm{l}}_{t,h^\dagger}''\rangle\right],
    \label{eq:regret_1_step1}
    \end{align}
    where the first term in the last inequality is obtained from the clipped domain $\mathcal{P}_{K}^\beta$ and the second term is obtained from the unbiased estimator ${\bm{l}}_{t,h^\dagger}''$ such that  $\mathbb{E}[{\bm{l}}_{t,h^\dagger}''|\mathcal{F}_{t-1}]=\mathbb{E}[\textit{\textbf{l}}_{t}|\mathcal{F}_{t-1}]$ where $\mathcal{F}_{t-1}$ denotes the natural filtration generated by the history up to round $t-1$. We can observe that the clipped domain controls the distance between the initial distribution at $t_s$ and the best arm unit vector for the time steps over $[t_s,t_{s+1}-1]$. Let \begin{align*}
        \tilde{\textit{\textbf{p}}}_{t+1,h^\dagger}=\argmin_{\textit{\textbf{p}}\in\mathbb{R}^K}\langle \textit{\textbf{p}},{\bm{l}}_{t,h^\dagger}''\rangle+D_{F_{\xi(h^\dagger)}}(\textit{\textbf{p}},\textit{\textbf{p}}_{t,h^\dagger}).
    \end{align*}
    Then, by solving the optimization problem, for all $k\in[K]$ we can obtain \[\tilde{p}_{t+1,h^\dagger}(k)=p_{t,h^\dagger}(k)\exp(-\xi(h^\dagger) l_{t,h^\dagger}^{\prime\prime}(k)).\]
    
    For the second term of the last inequality in \eqref{eq:regret_1_step1}, we have
    % \begin{lemma}[Theorem 28.4 in \cite{lattimore2020bandit}]
     for any $\textit{\textbf{p}}\in\mathcal{P}_{K}^\beta$, 
\begin{align}
    &\sum_{t=t_s}^{t_{s+1}-1}\langle \textit{\textbf{p}}_{t,h^\dagger}-\textit{\textbf{p}},{\bm{l}}_{t,h^\dagger}''\rangle\le D_{F_{\xi(h^\dagger)}}(\textit{\textbf{p}},\textit{\textbf{p}}_{t_s,h^\dagger})+\sum_{t=t_s}^{t_{s+1}-1}D_{F_{\xi(h^\dagger)}}(\textit{\textbf{p}}_{t,h^\dagger},\tilde{\textit{\textbf{p}}}_{t+1,h^\dagger}).\label{eq:regret_1_step2}
\end{align}
% \label{lem:regret_1_step2}
%     \end{lemma}
The first term is for the initial point diameter at time $t_s$ and the second term is for the divergence of the updated policy. Using the definition of the Bregman divergence and the fact that $p_{t_s,h^\dagger}(k)\ge \beta$, the initial point diameter term can be shown to be bounded as follows:  
\begin{align}
D_{F_{\xi(h^\dagger)}}(\textit{\textbf{p}},\textit{\textbf{p}}_{t_s,h^\dagger}) \le \frac{\log(1/\beta)}{\xi(h^\dagger)}.\label{eq:regret_1_step3}
\end{align}
 Next, for the updated policy divergence term, using $\tilde{p}_{t+1,h^\dagger}(k)=p_{t,h^\dagger}(k)\exp(-\xi(h^\dagger) l_{t,h^\dagger}^{\prime\prime}(k))$ for all $k\in[K]$, we have
    \begin{align}
\sum_{t=t_s}^{t_{s+1}-1}\mathbb{E}\left[D_{F_{\xi(h^\dagger)}}(\textit{\textbf{p}}_{t,h^\dagger},\tilde{\textit{\textbf{p}}}_{t+1,h^\dagger})\right] \le \frac{\xi({h^\dagger})KT_s}{2\alpha}.\label{eq:regret_1_step4}
    \end{align}
    % where the first inequality comes from $\exp(-x)\le 1-x+x^2/2$ for all $x\ge 0$, the second inequality comes from $\mathbb{E}[l_{t,h^\dagger}^{\prime\prime}(k)^2\mid p_{t,h^\dagger}(k),p_t(h^\dagger)]\le 1/(p_t(h^\dagger)p_{t,h^\dagger}(k))$, and the last inequality is obtained from $p_t(h^\dagger)\ge \alpha$ from the clipped domain. We can observe that the clipped domain controls the variance of estimators.
    % 
    Then from \eqref{eq:regret_1_step1},  \eqref{eq:regret_1_step2}, \eqref{eq:regret_1_step3}, and \eqref{eq:regret_1_step4}, by summing up over $s\in[S]$,  we have
    \begin{align}
    &\sum_{t=1}^T\mathbb{E}\left[l_t(a_{t, h^\dagger})\right]-\sum_{s=0}^S\min_{k_s\in[K]}\sum_{t=t_s}^{t_{s+1}-1}l_t(k_s)\le \beta T(K-1)+\frac{S\log(1/\beta)}{\xi({h^\dagger})}+\frac{\xi({h^\dagger})KT}{2\alpha}.\label{eq:1_result1}
\end{align}

Next, we provide a bound for the following regret from the master:
\begin{align*}
    \sum_{t=1}^T\mathbb{E}\left[l_t(a_{t, h_t})\right]-\sum_{t=1}^T\mathbb{E}\left[l_t(a_{t, h^\dagger})\right].
\end{align*}
Let 
$\tilde{\textit{\textbf{p}}}_{t+1}=\argmin_{\textit{\textbf{p}}\in\mathbb{R}^H}\langle \textit{\textbf{p}},\textit{\textbf{l}}_{t}^{\prime}\rangle+D_{F_{\eta}}(\textit{\textbf{p}},\textit{\textbf{p}}_{t})$ and $\textit{\textbf{e}}_{h,H}$ denote the unit vector with 1 at base $h$-index and $0$ at the rest of $H-1$ indices. For ease of presentation, we define $\tilde{l}_t(h)=l_t(a_{t,h})$.
Then, we have
\begin{align}
\sum_{t=1}^{T}\mathbb{E}\left[l_t(a_{t, h_t})-l_t(a_{t, h^\dagger})\right]\le \alpha T(H-1)+\max_{\textit{\textbf{p}}\in\mathcal{P}_{H}^\alpha}\mathbb{E}\left[\sum_{t=1}^{T}\langle \textit{\textbf{p}}_{t}-\textit{\textbf{p}},\tilde{\textit{\textbf{l}}}_{t}\rangle\right].
    \label{eq:regret_2_step1}
    \end{align}
    
\paragraph{{Regret from the master.}}   For bounding the second term in \eqref{eq:regret_2_step1}, which arises from the master, we use the following: for any $\textit{\textbf{p}}\in\mathcal{P}_{H}^\alpha$
% \begin{lemma}[Theorem 28.4 in \cite{lattimore2020bandit}] 
    \begin{align}
\sum_{t=1}^{T}\langle \textit{\textbf{p}}_{t}-\textit{\textbf{p}},\tilde{\textit{\textbf{l}}}_{t}\rangle\le F_\eta(\textit{\textbf{p}})-F_\eta(\textbf{\textit{p}}_1)+\sum_{t=1}^T D_{F_\eta}(\textbf{\textit{p}}_t,\tilde{\textbf{\textit{p}}}_{t+1}).\label{eq:l_sum_bd_F-D}
    \end{align}
    % \label{lem:l_sum_bd_F-D}
% \end{lemma}
 From \eqref{eq:regret_2_step1} and \eqref{eq:l_sum_bd_F-D}, we have
\begin{align}
&\sum_{t=1}^T\mathbb{E}\left[l_t(a_{t, h_t})\right]-\sum_{t=1}^T\mathbb{E}\left[l_t(a_{t, h^\dagger})\right]\cr&\le \alpha T(H-1)+\max_{\textit{\textbf{p}}\in\mathcal{P}_{H}^\alpha}\mathbb{E}\left[F_\eta(\textit{\textbf{p}})-F_\eta(\textbf{\textit{p}}_1)+\sum_{t=1}^T D_{F_\eta}(\textbf{\textit{p}}_t,\tilde{\textbf{\textit{p}}}_{t+1})\right]\cr  & 
    \le \alpha T(H-1)+\frac{\log(H)}{\eta}+\frac{\eta TH}{2}\label{eq:1_result2}.
\end{align}
% where the last inequality is obtained from the fact that 
% \begin{align*}
%     F_\eta(\textit{\textbf{p}})-F_\eta(\textbf{\textit{p}}_1)\le -F_\eta(\textbf{\textit{p}}_1)\le \frac{\log(H)}{\eta} 
% \end{align*}
% and 
% \begin{align*}
%     \mathbb{E}\left[\sum_{t=1}^T D_{F_\eta}(\textbf{\textit{p}}_t,\tilde{\textbf{\textit{p}}}_{t+1})\right]&\le \frac{\eta TK}{2}.
% \end{align*}
% From Theorem 28.4 and problem 28.10 in \cite{lattimore2020bandit} we can show that 
% \begin{align}
%     &\sum_{t=1}^T\mathbb{E}\left[l_t(a_{t, h_t})\right]-\sum_{t=1}^T\mathbb{E}\left[l_t(a_{t, h^\dagger})\right]\cr & \quad
%     \le \alpha TH+\frac{\log(1/\alpha)}{\eta}+\frac{\eta TK},{2}\label{eq:1_result2}
% \end{align}
% where the last inequality is obtained from the fact that \begin{align*}
%     F_{\eta}(p_t)
% \end{align*}

\paragraph{{Overall Regret.}} Therefore, putting \eqref{eq:1_decom}, \eqref{eq:1_result1}, and \eqref{eq:1_result2} altogether, we have
\begin{align*}
    R_S(T)&=
    \sum_{t=1}^T\mathbb{E}\left[l_t(a_t)\right]-\sum_{s=0}^S\min_{1\le k_s\le K}\sum_{t=T_s}^{T_{s+1}-1}l_t(k_s)\cr &\le \alpha TH+\frac{\log(H)}{\eta}+\frac{\eta TH}{2}+  \beta T(K-1)+\frac{S\log(1/\beta)}{\xi({h^\dagger})}+\frac{\xi({h^\dagger})KT}{2\alpha}\cr
    &=\tilde{O}(S^{1/2}T^{2/3}K^{1/3}),
\end{align*} where $\alpha=K^{1/3}/(T^{1/3}H^{1/2})$, $\beta=1/(KT)$, $\eta=1/\sqrt{TH}$, $\xi(h^\dagger)={(h^\dagger)}^{1/2}/(K^{1/3}T^{2/3})$, $h^\dagger=\Theta(S)$, and $H=\log(T)$.
This completes the proof.
\end{proof}
% From Theorem~\ref{thm:1}, the regret bound of Algoritm~\ref{alg:alg1} is tight with respect to $S$ compared to that of \texttt{EXP3.S} \cite{auer2002nonstochastic} which has a linear dependency on $S$. Therefore, when $S$ is large (specifically $S=\omega((T/K)^{1/3})$), Algorithm~\ref{alg:alg1} performs better than \texttt{EXP3.S}. 

Compared to the prior parameter-free algorithm based on the Bandit-over-Bandit (\texttt{BOB}) approach~\citep{cheung2019learning}, which incurs a suboptimal dependence on the time horizon $T$, specifically, a regret term of order $T^{3/4}$, our algorithm achieves a tighter regret bound in terms of $T$. In particular, when $T = \omega(S^6 K^4)$, Algorithm~\ref{alg:alg1} achieves a tighter regret bound than that of \texttt{BOB}. 

However, the regret bound achieved by Algorithm~\ref{alg:alg1} remains of order $O(T^{2/3})$, rather than the optimal $O(\sqrt{T})$, due to the high variance in the loss estimators. This variance arises from the double sampling process---first selecting a base, then an arm---at each round. To address this issue, we next propose an improved algorithm that leverages \textit{adaptive} learning rates to better control the variance of the estimators.
 
% Compared with the previous parameter-free algorithm of bandit-over-bandit approach (\texttt{BOB}) \cite{cheung2019learning} having a loose dependency on $T$ as  $T^{3/4}$, our 
% algorithm has a tighter regret bound with respect to $T$. Therefore, 
% when $T$ is large (specifically  $T=\omega(S^6K^4)$), Algorithm~\ref{alg:alg1} achieves a better regret bound than \texttt{BOB}. However, the achieved regret bound from Algorithm~\ref{alg:alg1} still has $O(T^{2/3})$ rather than $O(\sqrt{T})$ due to the large variance of loss estimators from sampling twice at each time for a base and an arm. In the following, we provide an algorithm utilizing \textit{adaptive} learning rates to control the variance of estimators. 

% However, the regret bound has $O(T^{3/2})$ because of additional variance from the hierarchical bandits. The negative entropy based OMD updates $p_{t+1}(h)\propto\exp(-\eta\sum_{s=1}^tl_s^\prime(h))$ \cite{auer2002nonstochastic}, which can be very small from the exponential weighting.  In the following, we provide another algorithm based on a different regularizer for the master to reduce the variance.
\subsection{Master-Base OMD with Adaptive Learning Rates}

We now propose Algorithm~\ref{alg:alg2}, which incorporates adaptive learning rates to better control the variance of the loss estimators. We begin by describing the base algorithm. Each base employs the negative entropy regularizer, but with a {time-varying adaptive learning rate $\xi_t(h)$}, defined as: \[F_{\xi_t(h)}(\textit{\textbf{p}})=\frac{1}{\xi_t(h)} \sum_{i=1}^d(p(i)\log p(i)-p(i)),\]
where $\boldsymbol{p} \in \mathbb{R}^d$ is a probability distribution over arms. The learning rate $\xi_t(h)$ is dynamically adjusted at each round $t$ based on the variance of the loss estimators, and is given by:
\[\xi_t(h)=\sqrt{h/(KT\rho_t(h))},\] 
where $\rho_t(h)$ is a variance threshold term that will be defined later. {This formulation ensures that when the variance of the estimators is small, a larger learning rate is used—allowing for more aggressive updates—while high variance naturally leads to more conservative updates. Notably, this adaptive base algorithm is effectively combined with the master employing log-barrier regularization to control the regret due to variance, resulting in \eqref{eq:2_step3}, which introduces a {novel integration} of adaptive learning and log-barrier regularization for variance control.}

% Here we propose Algorithm~\ref{alg:alg2}, which utilizes adaptive learning rates to control the variance of estimators. 
% We first explain the base algorithm. For the base algorithm, we propose to use the negative entropy regularizer with \textit{adaptive} learning rate $\xi_t(h)$ such that \[F_{\xi_t(h)}(\textit{\textbf{p}})=\frac{1}{\xi_t(h)} \sum_{i=1}^d(p(i)\log p(i)-p(i)).\]  The adaptive learning rate   $\xi_t(h)$ is optimized using variance information for loss estimators at each time $t$ to control the variance such that 
%  \[\xi_t(h)=\sqrt{h/(KT\rho_t(h))},\] where $\rho_t(h)$ is a variance threshold term (to be specified later). This implies that if the variance of the estimators is small, then the learning rate becomes large.

For the master algorithm, inspired by the approach in \citet{agarwal2017corralling}, we employ a log-barrier regularizer with increasing learning rates. This design introduces a negative bias term that effectively offsets the variance arising from the base algorithms, particularly by addressing the worst-case scenario in terms of the variance threshold $\rho_t(h^\dagger)$. The log-barrier regularizer is defined as: \[G_{\bm{\eta}_t}(\textit{\textbf{p}})=-\sum_{i=1}^d\frac{\log p(i)}{\eta_t(i)},\] where $\boldsymbol{p} \in \mathcal{P}_d$ is the master distribution, and ${\bm{\eta}_t = (\eta_t(1), \dots, \eta_t(d))}$ denotes the vector of learning rates at time $t$. {We describe the learning rate update procedures for both the master and the base algorithms in Algorithm~\ref{alg:alg2}; all other components remain identical to those in Algorithm~\ref{alg:alg1}. The variance of the loss estimator $l_{t+1}^\prime(h)=l_t(a_{t+1,h})\textbf{1}(h_{t+1}=h)/p_{t+1}(h)$ for base $h$ is given by $1/p_{t+1}(h)$. If this variance exceeds the threshold $\rho_t(h)$, i.e., $1/p_{t+1}(h) > \rho_t(h)$, the master increases the learning rate as: $\eta_{t+1}(h)=\gamma \eta_t(h)$ for some fixed $\gamma > 1$. Simultaneously, the variance threshold is updated to: $\rho_{t+1}(h)=2/p_{t+1}(h)$ which is also used to adaptively tune the base learning rate $\xi_t(h)$ as described earlier. If the variance does not exceed the threshold, both $\eta_t(h)$ and $\rho_t(h)$ remain unchanged from the previous time step.}

\begin{algorithm}[t]
\caption{Master-base OMD with adaptive learning rates}\label{alg:alg2}
\begin{algorithmic}
% \STATE Given: $\pi_0$ and $\tau$.
\STATE \textbf{Input:} $T$, $K$, $\mathcal{H}$ 
\STATE \textbf{Initialization:} $\alpha=1/(TH)$,  $\beta=1/(TK)$, $\gamma=e^{\frac{1}{\log T}}$, $\eta=\sqrt{H/T}$, $\rho_1(h)=2H$, $\eta_{1}(h)=\eta$, $p_1(h)=1/H$, $p_{1,h}(a)=1/K$ for $ h\in \mathcal{H}$ and $a\in [K]$.
\FOR{$t=1,\dots,T$}
\STATE \textbf{Select a base and an arm}:
\STATE Draw $h_t\sim$ probabilities $\{p_t(h)\}_{h\in \mathcal{H}}$.
\STATE Draw $a_{t,h_t}\sim$ probabilities  $\{p_{t,h_t}(a)\}_{a\in [K]}$.
\STATE Pull $a_t=a_{t,h_t}$ and  Receive $l_t(a_{t,h_t})\in [0,1]$.
% \STATE Draw $h_t\sim$ probabilities $ p_t(h)$ for $h\in \mathcal{H}$.
% \STATE Draw $a_t\sim$ probabilities  $p_{t,h_t}(a)$ for $a\in [K]$. 
% \STATE Receive $l_t(a_t)\in [0,1]$.
\STATE \textbf{Update loss estimators}:
\STATE$l_t^\prime(h_t)=\frac{l_t(a_{t,h_t})}{p_t(h_t)}$ and $l_t^\prime(h)=0$ for $h\in \mathcal{H}/\{h_t\}$. 
\STATE$l_{t,h_t}^{\prime\prime}(a_{t,h_t})=\frac{l_t^\prime(h_t)}{p_{t,h_t}(a_{t,h_t})}$ and $l_{t,h}^{\prime\prime}(a)=0$ for  $h\in \mathcal{H}/\{h_t\}$, $a\in [K]/\{a_{t,h_t}\}$.
% \STATE $l_t^\prime(h)=\frac{l_t(a_t)}{p_t(h)}\mathbbm{1}(h=h_t)$ for $h\in \mathcal{H}$. 
% \STATE $l_{t,h}^{\prime\prime}(a)=\frac{l_t^\prime(h)}{p_{t,h}(a)}\mathbbm{1}(a=a_t)$ for $h\in \mathcal{H}$, $a\in [K]$.
\STATE \textbf{Update distributions}:
\STATE$\textit{\textbf{p}}_{t+1}=\arg\min_{\textit{\textbf{p}}\in\mathcal{P}_{H}^\alpha}\langle \textit{\textbf{p}},\textit{\textbf{l}}_t^\prime\rangle+D_{{G_{\bm{\eta}_t}}}(\textit{\textbf{p}},\textit{\textbf{p}}_t)$ 
\STATE$\textit{\textbf{p}}_{t+1,h}=\arg\min_{\textit{\textbf{p}}\in\mathcal{P}_{K}^\beta}\langle \textit{\textbf{p}},\textit{\textbf{l}}_{t,h}^{\prime\prime}\rangle+D_{F_{\xi_t(h)}}(\textit{\textbf{p}},\textit{\textbf{p}}_{t,h})$ for $h\in\mathcal{H}$ 
\STATE \textbf{Update learning rates}:
\STATE For {$h\in \mathcal{H}$}
\STATE\hspace{3mm}If {$\frac{1}{p_{t+1}(h)}>\rho_t(h)$}, then\\\hspace{6mm}$\rho_{t+1}(h)=\frac{2}{p_{t+1}(h)},\eta_{t+1}(h)=\gamma\eta_t(h)$.
\STATE\hspace{3mm}Else, $\rho_{t+1}(h)=\rho_t(h),\eta_{t+1}(h)=\eta_t(h)$.
\ENDFOR
\end{algorithmic}
\end{algorithm}

In the following theorem, we provide a regret bound of Algorithm~\ref{alg:alg2}.
\begin{theorem}
For any switch number $S\in[T-1]$ and any $\rho \ge \mathbb{E}[\rho_T(h^\dagger)]$, Algorithm 2 achieves a regret bound of
$$R_S(T)=\tilde{O}\left(\min\left\{\sqrt{SKT\rho},S\sqrt{KT}\right\}\right).$$
\label{thm:alg2}
\end{theorem}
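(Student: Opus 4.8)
The plan is to follow the same two-part decomposition used for Theorem~\ref{thm:1}: write $R_S(T)$ as a master regret $\sum_t\mathbb{E}[l_t(a_t(h_t))]-\sum_t\mathbb{E}[l_t(a_t(h^\dagger))]$ against the near-optimal base $h^\dagger$ plus the base regret of $h^\dagger$ against the switching arms, exactly as in~\eqref{eq:1_decom}. I would bound the two pieces separately and then read off the two entries of the $\min$: dropping the master's negative bias gives the variance-dependent entry $\mathbb{E}[\sqrt{SKT\rho_T(h^\dagger)}]$, while balancing that bias against the base's variance growth gives the worst-case entry $S\sqrt{KT}$.

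For the base regret I would re-run the segment-wise argument of Theorem~\ref{thm:1}, but now with the time-varying rate $\eta_t(h^\dagger)=\sqrt{h^\dagger/(KT\rho_t(h^\dagger))}$. The clipped simplex $\mathcal{A}_{\mathcal{K}}$ still contributes $\beta T K=O(1)$ with $\beta=1/(TK)$. The main change is that Lemma~\ref{lem:regret_1_step2} must be replaced by its adaptive-rate version: since $\rho_t(h^\dagger)$ is nondecreasing, $\eta_t(h^\dagger)$ is nonincreasing, and the non-telescoping part of the diameter sum produces a correction $\sum_t(\tfrac{1}{\eta_t(h^\dagger)}-\tfrac{1}{\eta_{t-1}(h^\dagger)})\,g(\cdot,p_{t,h^\dagger})$ with $g\ge 0$; using $p_{t,h^\dagger}(k)\ge\beta$ and telescoping $1/\eta_t(h^\dagger)$ this is of the same order as the initial-point diameters, which sum over the $S$ segments to $\tilde{O}(S/\eta_T(h^\dagger))$ by monotonicity $\eta_t\ge\eta_T$. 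For the divergence terms $\sum_t\tfrac{\eta_t(h^\dagger)}{2}\mathbb{E}[K/p_t(h^\dagger)]$ I would use the design invariant $1/p_t(h^\dagger)\le\rho_t(h^\dagger)\le\rho_T(h^\dagger)$ together with the tuning of $\eta_t(h^\dagger)$ to get $\sum_t\tfrac12\sqrt{h^\dagger K\rho_t(h^\dagger)/T}\le\tfrac12\sqrt{h^\dagger KT\rho_T(h^\dagger)}$. Since $S/\eta_T(h^\dagger)=\tilde{O}(S\sqrt{KT\rho_T(h^\dagger)/h^\dagger})$ and $h^\dagger=\Theta(S)$, both pieces collapse to $\tilde{O}(\sqrt{SKT\rho_T(h^\dagger)})$.

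For the master regret I would invoke the \texttt{Corral}-style log-barrier analysis. The master stability term is $\tilde{O}(\eta T)$ because $p_t(h)^2 l_t^\prime(h)^2\le l_t(a_t)^2\le 1$, and the static diameter is $\tilde{O}(H\log T/\eta)$; with $\eta=\sqrt{H/T}$ these give $\tilde{O}(\sqrt{HT})$. The nondecreasing per-coordinate rate $\xi_t(h^\dagger)$, multiplied by $\gamma$ exactly when $1/p_{t+1}(h^\dagger)$ exceeds $\rho_t(h^\dagger)$, makes the increments $\tfrac{1}{\xi_{t+1}(h^\dagger)}-\tfrac{1}{\xi_t(h^\dagger)}$ negative against $\log(1/p_{t+1}(h^\dagger))$, producing the negative bias term that offsets the variance amplification of the base. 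I would also record that each increase at least doubles $\rho$, so the number of increases is $O(\log T)$, keeping $\xi_T(h^\dagger)=\Theta(\eta)$ and the master well-defined. Putting the pieces together yields a total bound of the form $\tilde{O}(\sqrt{HT})+c\sqrt{SKT\rho_T(h^\dagger)}-b\,\rho_T(h^\dagger)$ in expectation; dropping the negative term gives the first entry of the $\min$, and treating $\rho_T(h^\dagger)$ as a free nonnegative quantity and maximizing $c\sqrt{SKT}\sqrt{\rho}-b\rho$ by AM-GM caps the variance-dependent growth at the worst-case value, which I claim evaluates to $\tilde{O}(S\sqrt{KT})$.

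The main obstacle is the master step: extracting the \texttt{Corral} negative bias with the \emph{correct} coefficient $b$ from the per-coordinate increasing-rate log-barrier, and then verifying that the trade-off $c\sqrt{SKT\rho}-b\rho$ balances to exactly $S\sqrt{KT}$ rather than a looser $SK\sqrt{T}$; this requires tracking how $\xi_T(h^\dagger)$, the increase count, and $\rho_T(h^\dagger)$ are linked through the update rule. A secondary delicate point is the adaptive-rate correction term in the base OMD, which must be shown to be of no larger order than the diameter sum using the monotonicity of $\eta_t(h^\dagger)$ and the $\beta$-clipping.
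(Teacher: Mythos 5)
Your proposal follows essentially the same route as the paper: the identical master/base decomposition around $h^\dagger$, the adaptive-rate OMD analysis for the base with the telescoping of $1/\eta_t(h^\dagger)$ controlled by $\beta$-clipping and monotonicity plus the invariant $1/p_t(h^\dagger)\le\rho_t(h^\dagger)$, the \texttt{Corral} log-barrier negative bias for the master, and the final AM-GM maximization over $\rho_T(h^\dagger)$ to obtain the worst-case $\tilde{O}(S\sqrt{KT})$ entry of the $\min$. The steps you flag as delicate are exactly the ones the paper handles via its Appendix lemma and the imported Lemma 13 of \citet{agarwal2017corralling}, and your sketch of each is consistent with those arguments.
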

\begin{proof}[Proof Sketch] Here, we provide a proof sketch, and the full version is provided in Appendix~\ref{app:thm2}. {As in the Theorem~\ref{thm:1},  we decompose the regret into two parts: one is the regret from the master selecting a base at each time, and the other is the regret from the base selecting an arm at each time.}
Let $t_s$ be the time when the $s$-th switch of the best arm happens and $t_{S+1}-1=T$, $t_0=1$. Also let $t_{s+1}-t_s=T_s$. For any $t_s$ for all $s\in[0,S]$, the $S$-switch regret can be expressed as 
\begin{align}
    R_S(T)&=\sum_{t=1}^T\mathbb{E}\left[l_t(a_t)\right]-\sum_{s=0}^S\min_{k_s\in[K]}\sum_{t=t_s}^{t_{s+1}-1}l_t(k_s)\cr&=\sum_{t=1}^T\mathbb{E}\left[l_t(a_{t, h_t})\right]-\sum_{t=1}^T\mathbb{E}\left[l_t(a_{t, h^\dagger})\right]+\sum_{t=1}^T\mathbb{E}\left[l_t(a_{t, h^\dagger})\right]-\sum_{s=0}^S\min_{k_s\in[K]}\sum_{t=t_s}^{t_{s+1}-1}l_t(k_s).\label{eq:2_decom}
\end{align}
% in which the first two terms are closely related with the regret from the master algorithm against the near optimal base $h^\dagger$, and the remaining terms are related with the regret from $h^\dagger$ base algorithm against the best arms in hindsight.
%  $$\sum_{t=1}^T\mathbb{E}\left[l_t(a_{t, h_t})\right]-\sum_{t=1}^T\mathbb{E}\left[l_t(a_{t, h^\dagger})\right]$$ and $$\sum_{t=1}^T\mathbb{E}\left[l_t(a_{t, h^\dagger})\right]-\sum_{s=0}^S\min_{1\le k_s\le K}\sum_{t=t_s}^{t_{s+1}-1}l_t(k_s).$$
% \begin{align}
%     &\sum_{t=1}^T\mathbb{E}\left[l_t(a_t)\right]-\sum_{s=0}^S\min_{1\le k_s\le K}\sum_{t=t_s}^{t_{s+1}-1}l_t(k_s)\cr&=\sum_{t=1}^T\mathbb{E}\left[l_t(a_{t, h_t})\right]-\sum_{t=1}^T\mathbb{E}\left[l_t(a_{t, h^\dagger})\right]\cr &+\sum_{t=1}^T\mathbb{E}\left[l_t(a_{t, h^\dagger})\right]-\sum_{s=0}^S\min_{1\le k_s\le K}\sum_{t=t_s}^{t_{s+1}-1}l_t(k_s).\label{eq:2_decom}
% \end{align}

\paragraph{{Regret from the near-optimal base.}} First we provide a bound for the following regret from base $h^\dagger$. From \eqref{eq:regret_1_step1}, we can obtain
\begin{align}
    \sum_{t=t_s}^{t_{s+1}-1}\mathbb{E}\left[l_t(a_{t, h^\dagger})\right]-\sum_{t=t_s}^{t_{s+1}-1}l_t(k_s)\le \beta T_sK+\mathbb{E}\left[\max_{p\in\mathcal{P}_{K}^\beta }\sum_{t=t_s}^{t_{s+1}-1}\langle \textit{\textbf{p}}_{t,h^\dagger}-\textit{\textbf{p}},{\bm{l}}_{t,h^\dagger}''\rangle\right].
    \label{eq:regret_2_step1_2}
    \end{align}
 Then for the second term of the last inequality in \eqref{eq:regret_2_step1_2}, we provide the following lemma.
\begin{lemma}\label{lem:regret_2_step1}
 For any $\textit{\textbf{p}}\in\mathcal{P}_{K}^\beta$ we can show that
\begin{align*}
&\sum_{t=t_s}^{t_{s+1}-1}\mathbb{E}\left[\langle \textit{\textbf{p}}_{t,h^\dagger}-\textit{\textbf{p}},\bm{l}_{t,h^\dagger}''\rangle\right] \le\mathbb{E}\left[2\log(1/\beta)\sqrt{\frac{KT\rho_T(h^\dagger)}{h^\dagger}}+\frac{T_s}{2} \sqrt{\frac{SK\rho_T(h^\dagger)}{T}}\right].
\end{align*}
\end{lemma}

 Then from \eqref{eq:regret_2_step1} and Lemma~\ref{lem:regret_2_step1}, we have

\begin{align}
    \sum_{t=1}^T\mathbb{E}\left[l_t(a_{t, h^\dagger})\right]-\sum_{s=0}^S\min_{1\le k_s\le K}\sum_{t=T_s}^{T_{s+1}-1}l_t(k_s)
    \le \beta T(K-1)+\mathbb{E}\left[2S\log(1/\beta)\sqrt{\frac{KT\rho_T(h^\dagger)}{h^\dagger}}+\frac{1}{2} \sqrt{TSK\rho_T(h^\dagger)}\right].\label{eq:2_step2}
\end{align}
\paragraph{{Regret from the master.}} Next, we provide a bound for the regret from the master in the following:
\begin{align}
    \sum_{t=1}^T\mathbb{E}\left[l_t(a_{t, h_t})\right]-\sum_{t=1}^T\mathbb{E}\left[l_t(a_{t, h^\dagger})\right]
\le
    % \sum_{t=1}^T\left(D_{F}(p,p_t)-D(p,p_{t+1})+D(p_t,\tilde{p}_{t+1})\right)
    % \cr &\le
    O\left(\frac{H\log (T)}{\eta}+T\eta\right)-\mathbb{E}\left[\frac{\rho_T(h^\dagger)}{40\eta \log T}\right]+\alpha T(H-1).\label{eq:2_step22}
\end{align}

The negative bias term in \eqref{eq:2_step22} is derived from the log-barrier regularizer and increasing learning rates $\eta_t(h)$. This term is critical to bound the worst case regret which will be shown soon. Also, $H\log(T)/\eta$ is obtained from $H\log(1/(H\alpha))/\eta$ considering the clipped domain.

\paragraph{{Overall Regret.}}
Then, putting \eqref{eq:2_decom}, \eqref{eq:2_step2} and \eqref{eq:2_step22} altogether, we have
\begin{align}
R_S(T) =\sum_{t=1}^T\mathbb{E}\left[l_t(a_t)\right]-\sum_{s=0}^S\min_{1\le k_s\le K}\sum_{t=T_s}^{T_{s+1}-1}l_t(k_s)=\tilde{O}\left(\mathbb{E}\left[\sqrt{SKT\rho_T(h^\dagger)}\right]\right)-\mathbb{E}\left[\frac{\rho_T(h^\dagger)\sqrt{TK}}{40\sqrt{H}\log(T)}\right],\label{eq:2_step3}
\end{align}
 Then, we can obtain
\begin{align*}
    R_S(T)=\tilde{O}\left(\min\left\{\sqrt{SKT\rho},S\sqrt{KT}\right\}\right),
\end{align*}
where $\tilde{O}(S\sqrt{KT})$ is obtained from the worst case of $\rho_T(h^\dagger)$. The worst case can be found by considering a maximum value of the concave bound of the last equality in \eqref{eq:2_step3} with variable $\rho_T(h^\dagger)>0$ such that $\rho_T(h^\dagger)=\tilde{\Theta}(S)$. This concludes the proof.
\end{proof}

We now provide a comparison of regret bounds with an existing approach. 
% For ease of comparison, we use the inequality $\mathbb{E}[\sqrt{\rho_T(h^\dagger)}] \le \sqrt{\mathbb{E}[\rho_T(h^\dagger)]}$ to simplify the regret bound in Theorem~\ref{thm:alg2} as follows: \[R_S(T)=\tilde{O}\left(\min\left\{\sqrt{SKT\mathbb{E}\left[\rho_T(h^\dagger)\right]},S\sqrt{KT}\right\}\right).\]
our regret bound depends on $\rho$, which captures the variance of the loss estimators $l_t^\prime(h^\dagger)$ over the time horizon $t \in [T]$. While the bound is variance-dependent, it is noteworthy that in the worst case, it is always upper bounded by $\tilde{O}(S\sqrt{KT})$. Importantly, Algorithm~\ref{alg:alg2} achieves a tight dependence of $O(\sqrt{T})$ in the regret bound, in contrast to Algorithm~\ref{alg:alg1} and \texttt{BOB} \citep{cheung2019learning,foster2020open}, which incur suboptimal $T^{2/3}$ and $T^{3/4}$ term, respectively. Therefore, when $T$ is sufficiently large, Algorithm~\ref{alg:alg2} yields a strictly better regret guarantee than Algorithm~\ref{alg:alg1} and \texttt{BOB}. We also note that the variance term $\rho$ is commonly observed in \cite{luo2022corralling}, but we control the worst case without information of $S$ using an adaptive learning rate.

% We also emphasize that the term $\mathbb{E}[\rho_T(h^\dagger)]$ is problem-dependent, and further analysis of this term remains an interesting direction for future work.

% The regret bound in Theorem~\ref{thm:alg2} depends on $\rho_T(h^\dagger)$ which is closely related with variance of loss estimators $l^\prime_t(h^\dagger)$ for $t\in[T-1]$.  Even though the regret bound depends on the variance term, it is of interest that the worst case bound is always bounded by $\tilde{O}(S\sqrt{KT})$.
% Algorithm~\ref{alg:alg2} has a tight regret bound $O(\sqrt{T})$ with respect to $T$. Therefore, when $T$ is large such that $T=\omega(S^4K^2)$,
% Algorithm~\ref{alg:alg2} shows a better regret bound compared with \texttt{BOB}. We note that the value of $\mathbb{E}[\rho_T(h^\dagger)]$ depends on the problems, and further analysis for the term would be an interesting avenue for future research.

\begin{remark}
    {Since we incorporate \(\rho_t(h)\) into the adaptive learning rate for each base $h$ as $
\xi_t(h) = \sqrt{{h}/{K T \rho_t(h)}},$ 
we can optimize the regret bound to depend on \(\sqrt{\rho_T(h^\dagger)}\), as demonstrated in Lemma 3.3.  Notably, this adaptive base algorithm is effectively combined with the master employing log-barrier regularization \cite{agarwal2017corralling} to control the regret due to variance, resulting  $\sqrt{S\rho_T(h^\dagger)}-\rho_T(h^\dagger)$. This integration is the main reason why Algorithm 2 can achieve a order of $\sqrt{T}$, even in the worst case, without the knowledge of $S$.}
\end{remark}

\begin{remark}
{Our algorithms are designed to perform well across different regimes, particularly with respect to $T$ and $S$. To recall the regret bounds, Algorithm~1 achieves a regret of  $\tilde{O}(S^{1/2} K^{1/3} T^{2/3})$, while Algorithm~2 achieves $\tilde{O}(\min\{\sqrt{SKT\rho},\, S\sqrt{KT}\})$. This indicates that Algorithm~1 is advantageous when $S$ is large, whereas Algorithm~2 is preferable for larger $T$ due to its use of an adaptive learning rate that accounts for the variance of the loss estimator.}
\end{remark}

\begin{remark}[Implementation]
For implementation of our algorithms, we briefly describe how to update the policy
$\textit{\textbf{p}}_t$ using OMD. Let $\widehat{l}_s(a)$ denote a loss estimator for ${l}_s(a)$ for
action $a\in[d]$ and $s\in[T]$. For the negative-entropy regularizer, by solving the optimization
in \eqref{eq:omd_1} (see \cite{lattimore2020bandit}), we obtain
\[
p_{t}(a)
=\frac{\exp\!\left(-\eta\sum_{s=1}^{t-1}\widehat{l}_s(a)\right)}
       {\sum_{b\in[d]}\exp\!\left(-\eta\sum_{s=1}^{t-1}\widehat{l}_s(b)\right)}.
\]
For the log-barrier regularizer, the solution is
$p_{t}(a)=(\eta\sum_{s=1}^{t-1}\widehat{l}_s(a)+Z)^{-1}$,
where $Z$ is the normalization constant ensuring $\sum_a p_{t}(a)=1$
\citep{luo2022corralling}. When the feasible set is the clipped simplex
$\mathcal{P}_d^{\epsilon/d}=\{p\in\Delta_d:\, p(a)\ge \epsilon/d\}$ for $0<\epsilon<1$, a computationally simple
way to enforce feasibility is to mix with the uniform distribution:
\[
\bar{p}_{t}(a)\;\leftarrow\;(1-\epsilon)\,p_{t}(a)+\epsilon/d
\quad\text{for all }a\in[d].
\]
This “uniform-mixing’’ trick, following \citet{auer2002nonstochastic}, guarantees $\bar p_t\in\mathcal{P}_d^{\epsilon/d}$ to ensure a bounded variance of the loss estimator. We emphasize that this is an implementation convenience rather than the exact Bregman projection onto $\mathcal{P}_d^{\epsilon/d}$; a rigorous regret analysis under this specific implementation is left for future work.
% This “uniform-mixing’’ trick, commonly used in \citet{auer2002nonstochastic}, guarantees
% $\bar p_t\in\mathcal{P}_d^{\epsilon}$. 
% While we expect that a regret bound of the same order can be established under this
% implementation, the proof would require following the framework in
% \citet{auer2002nonstochastic} rather than the Bregman-divergence-based analysis used in our work.
% A detailed theoretical justification is therefore left for future work.

% This is because the uniform-mixing step perturbs the expected loss by at most $O(\epsilon)$, 
% i.e., $|\mathbb{E}[\langle \bar{\bm{p}}_t, \bm{g}_t\rangle] -\mathbb{E}[\langle \bm{p}_t, \bm{g}_t\rangle]|=O(\epsilon)$ for $\bm{g}_t$ satisfying $\mathbb{E}[\bm{g}_t\mid \mathcal{F}_{t-1}]=\mathbb{E}[\bm{l}_t\mid \mathcal{F}_{t-1}]$, 
% resulting in at most $O(\epsilon T)$ additional regret over $T$ rounds.

% For instance, for  the negative entropy, the resulting deviation from the exact projection is at most
% $D_F((1-\epsilon)p+\epsilon u,p)=O(\epsilon)$ for the uniform distribution
% $u(a)=1/d$, 
\end{remark}

\begin{remark}
The regret bounds in Theorems~\ref{thm:1} and~\ref{thm:alg2} also extend to non-stationary stochastic bandit problems with unknown switching parameters, where the reward distributions may change over time. This generalization is possible because the adversarial bandit setting encompasses the stochastic setting as a special case.
\end{remark}

\section{Conclusion}

In this paper, we studied the adversarial bandit problem with $S$-switch regret, where the agent competes against any sequence of arms that switches at most $S$ times, without prior knowledge of $S$. To address this challenge, we proposed two algorithms based on the master-base framework integrated with the Online Mirror Descent (OMD) method. 

First, we introduced Algorithm~\ref{alg:alg1}, which employs a simple OMD update with a fixed learning rate and achieves a regret bound of $\tilde{O}(S^{1/2}K^{1/3}T^{2/3})$. To further improve performance with respect to $T$, we proposed Algorithm~\ref{alg:alg2}, which incorporates adaptive learning rates to control the variance of the loss estimators. This leads to an improved regret bound of $\tilde{O}(\min\{\sqrt{SKT\rho},S\sqrt{KT}\})$
where $\rho$ captures the variance of the estimators associated with the near-optimal base.

\section*{Acknowledgements}
We thank the Action Editor and the reviewers for their constructive and insightful feedback.  J.~Kim acknowledges the support of ANR through the PEPR IA FOUNDRY project (ANR-23-PEIA-0003)  and the Doom project (ANR-23-CE23-0002), as well as the ERC through the Ocean project (ERC-2022-SYG-OCEAN-101071601).

\bibliography{mybib}
\bibliographystyle{tmlr}
% \newpage
\appendix
\section{Appendix}

\subsection{Proof of Theorem~\ref{thm:1}}\label{app:thm1}
Let $t_s$ be the time when the $s$-th switch of the best arm happens and $t_{S+1}-1=T$, $t_0=1$. Also let $t_{s+1}-t_s=T_s$. For any $t_s$ for all $s\in[0,S]$, the $S$-switch regret can be expressed as 
\begin{align}
R_S(T)&=\sum_{t=1}^T\mathbb{E}\left[l_t(a_t)\right]-\sum_{s=0}^S\min_{k_s\in[K]}\sum_{t=t_s}^{t_{s+1}-1}l_t(k_s)\cr&=\sum_{t=1}^T\mathbb{E}\left[l_t(a_{t, h_t})\right]-\sum_{t=1}^T\mathbb{E}\left[l_t(a_{t, h^\dagger})\right]+\sum_{t=1}^T\mathbb{E}\left[l_t(a_{t, h^\dagger})\right]-\sum_{s=0}^S\min_{k_s\in[K]}\sum_{t=t_s}^{t_{s+1}-1}l_t(k_s),\label{app_eq:1_decom}
\end{align}
in which the first two terms are closely related with the regret from the master algorithm against the near optimal base $h^\dagger$, and the remaining terms are related with the regret from $h^\dagger$ base algorithm against the best arms in hindsight. We note that the algorithm does not need to know $h^\dagger$ in prior and $h^\dagger$ is brought here only for regret analysis.

First we provide a bound for the following regret from base $h^\dagger$:  \begin{align*}
    \sum_{t=1}^T\mathbb{E}\left[l_t(a_{t, h^\dagger})\right]-\sum_{s=0}^S\min_{k_s\in[K]}\sum_{t=t_s}^{t_{s+1}-1}l_t(k_s).
\end{align*} 
% By using the first-order opitmality condition for $\textit{\textbf{p}}_{t+1}$ and definition of the Bregman divergence, for any $\textit{\textbf{p}}\in\mathcal{P}_{K}^\beta$ we can show that 
% \begin{align}
%     &\langle \textit{\textbf{p}}_{t+1,h^\dagger}-\textit{\textbf{p}},\bm{l}_{t,h^\dagger}''\rangle\cr &\le D_F(\textit{\textbf{p}},\textit{\textbf{p}}_{t,h^\dagger})-D_F(\textit{\textbf{p}},\textit{\textbf{p}}_{t+1,h^\dagger})-D_F(\textit{\textbf{p}}_{t+1,h^\dagger},\textit{\textbf{p}}_{t,h^\dagger}).\cr
% \end{align}

Let $k_s^*=\arg\min_{k\in[K]}\sum_{t=t_s}^{t_{s+1}-1}l_t(k)$ and $\textit{\textbf{e}}_{j,K}$ denote the unit vector with 1 at $j$-index and $0$ at the rest of $K-1$ indices.
% By following the proof of Theorem 31.1 in \cite{lattimore2020bandit}
Then, we have
\begin{align}
    &\sum_{t=t_s}^{t_{s+1}-1}\mathbb{E}\left[l_t(a_{t, h^\dagger})-l_t(k_s^*)\right]\cr &=\sum_{t=t_s}^{t_{s+1}-1}\mathbb{E}\left[\langle \textit{\textbf{p}}_{t,h^\dagger}-\textit{\textbf{e}}_{k_s^*,K},\textit{\textbf{l}}_t\rangle\right]\cr
    &\le \max_{\textit{\textbf{p}}\in\mathcal{P}_{K}^\beta}\mathbb{E}\left[\sum_{t=t_s}^{t_{s+1}-1}\langle \textit{\textbf{p}}-\textit{\textbf{e}}_{k_s^*,K},\textit{\textbf{l}}_t\rangle+\sum_{t=t_s}^{t_{s+1}-1}\langle \textit{\textbf{p}}_{t,h^\dagger}-\textit{\textbf{p}},\textit{\textbf{l}}_{t}\rangle\right]
    \cr&\le \beta T_s(K-1)+\max_{\textit{\textbf{p}}\in\mathcal{P}_{K}^\beta }\mathbb{E}\left[\sum_{t=t_s}^{t_{s+1}-1}\langle \textit{\textbf{p}}_{t,h^\dagger}-\textit{\textbf{p}},{\bm{l}}_{t,h^\dagger}''\rangle\right],
    \label{app_eq:regret_1_step1}
    \end{align}
    where the first term in the last inequality is obtained from the clipped domain $\mathcal{P}_{K}^\beta$ and the second term is obtained from the unbiased estimator ${\bm{l}}_{t,h^\dagger}''$ such that  $\mathbb{E}[{\bm{l}}_{t,h^\dagger}''|\mathcal{F}_{t-1}]=\mathbb{E}[\textit{\textbf{l}}_{t}|\mathcal{F}_{t-1}]$ where $\mathcal{F}_{t-1}$ denotes the natural filtration generated by the history up to round $t-1$. We can observe that the clipped domain controls the distance between the initial distribution at $t_s$ and the best arm unit vector for the time steps over $[t_s,t_{s+1}-1]$. Let \begin{align*}
        \tilde{\textit{\textbf{p}}}_{t+1,h^\dagger}=\argmin_{\textit{\textbf{p}}\in\mathbb{R}^K}\langle \textit{\textbf{p}},{\bm{l}}_{t,h^\dagger}''\rangle+D_{F_{\xi(h^\dagger)}}(\textit{\textbf{p}},\textit{\textbf{p}}_{t,h^\dagger}).
    \end{align*}
    Then, by solving the optimization problem, for all $k\in[K]$ we can obtain \[\tilde{p}_{t+1,h^\dagger}(k)=p_{t,h^\dagger}(k)\exp(-\xi(h^\dagger) l_{t,h^\dagger}^{\prime\prime}(k)).\]
    
    For the second term of the last inequality in \eqref{app_eq:regret_1_step1}, we provide a lemma in the following.
    \begin{lemma}[Theorem 28.4 and Eq. 28.11 in \cite{lattimore2020bandit}]\label{app_lem:regret_1_step2}
     For any $\textit{\textbf{p}}\in\mathcal{P}_{K}^\beta$ we have
\begin{align*}
    &\sum_{t=t_s}^{t_{s+1}-1}\langle \textit{\textbf{p}}_{t,h^\dagger}-\textit{\textbf{p}},{\bm{l}}_{t,h^\dagger}''\rangle\le D_{F_{\xi(h^\dagger)}}(\textit{\textbf{p}},\textit{\textbf{p}}_{t_s,h^\dagger})+\sum_{t=t_s}^{t_{s+1}-1}D_{F_{\xi(h^\dagger)}}(\textit{\textbf{p}}_{t,h^\dagger},\tilde{\textit{\textbf{p}}}_{t+1,h^\dagger}).
\end{align*}
    \end{lemma}
   \begin{proof}
    For completeness, we provide a proof for this lemma.
    Fix any $\bm{p}\in\mathcal{P}_K^\beta$. 
By the first-order optimality condition of the unconstrained mirror-descent step, 
\[
\langle {\bm{l}}_{t,h^\dagger}'' + \nabla F_{\xi(h^\dagger)}({\bm{p}}_{t+1,h^\dagger}) - \nabla F_{\xi(h^\dagger)}(\bm{p}_{t,h^\dagger}),
\, \bm{p}-{\bm{p}}_{t+1,h^\dagger} \rangle \ge 0.
\]
This gives
% Setting $\bm{q}=\bm{p}$ gives
\[
\langle {\bm{p}}_{t+1,h^\dagger}-\bm{p},{\bm{l}}_{t,h^\dagger}''\rangle
\le
\langle \nabla F_{\xi(h^\dagger)}({\bm{p}}_{t+1,h^\dagger})-\nabla F_{\xi(h^\dagger)}(\bm{p}_{t,h^\dagger}),\,\bm{p}-{\bm{p}}_{t+1,h^\dagger}\rangle.
\]
Using the definition of Bregman divergence,
\[
\langle \nabla F_{\xi(h^\dagger)}({\bm{p}}_{t+1,h^\dagger})-\nabla F_{\xi(h^\dagger)}(\bm{p}_{t,h^\dagger}),\,\bm{p}-{\bm{p}}_{t+1,h^\dagger}\rangle
= D_{F_{\xi(h^\dagger)}}(\bm{p},\bm{p}_{t,h^\dagger})
- D_{F_{\xi(h^\dagger)}}(\bm{p},{\bm{p}}_{t+1,h^\dagger})
- D_{F_{\xi(h^\dagger)}}({\bm{p}}_{t+1,h^\dagger},\bm{p}_{t,h^\dagger}),
\]
we obtain
\begin{equation}
\label{eq:ineq-tilde-step1}
\langle {\bm{p}}_{t+1,h^\dagger}-\bm{p},{\bm{l}}_{t,h^\dagger}''\rangle
\le
D_{F_{\xi(h^\dagger)}}(\bm{p},\bm{p}_{t,h^\dagger})
- D_{F_{\xi(h^\dagger)}}(\bm{p},{\bm{p}}_{t+1,h^\dagger})
- D_{F_{\xi(h^\dagger)}}({\bm{p}}_{t+1,h^\dagger},\bm{p}_{t,h^\dagger}).
\end{equation}

We now decompose
\[
\langle \bm{p}_{t,h^\dagger}-\bm{p},{\bm{l}}_{t,h^\dagger}''\rangle
= \langle \bm{p}_{t,h^\dagger}-{\bm{p}}_{t+1,h^\dagger},{\bm{l}}_{t,h^\dagger}''\rangle
  + \langle {\bm{p}}_{t+1,h^\dagger}-\bm{p},{\bm{l}}_{t,h^\dagger}''\rangle.
\]
Combining with \eqref{eq:ineq-tilde-step1} yields
\begin{align}
\langle \bm{p}_{t,h^\dagger}-\bm{p},{\bm{l}}_{t,h^\dagger}''\rangle
&\le
\langle \bm{p}_{t,h^\dagger}-{\bm{p}}_{t+1,h^\dagger},{\bm{l}}_{t,h^\dagger}''\rangle
- D_{F_{\xi(h^\dagger)}}({\bm{p}}_{t+1,h^\dagger},\bm{p}_{t,h^\dagger})
+ D_{F_{\xi(h^\dagger)}}(\bm{p},\bm{p}_{t,h^\dagger})
- D_{F_{\xi(h^\dagger)}}(\bm{p},{\bm{p}}_{t+1,h^\dagger}).
\label{eq:decomp-pt1}
\end{align}
Recall the unconstrained mirror step
\[
\tilde{\boldsymbol{p}}_{t+1,h^\dagger}
= \arg\min_{\boldsymbol{u}}
\Bigl\{\,
\langle {\bm{l}}_{t,h^\dagger}'',\boldsymbol{u}\rangle
+ D_{F_{\xi(h^\dagger)}}(\boldsymbol{u},\boldsymbol{p}_{t,h^\dagger})
\Bigr\}.
\]
By the first-order optimality condition,
\begin{equation}
\label{eq:FOC1}
{\bm{l}}_{t,h^\dagger}''
+ \nabla F_{\xi(h^\dagger)}(\tilde{\boldsymbol{p}}_{t+1,h^\dagger})
- \nabla F_{\xi(h^\dagger)}(\boldsymbol{p}_{t,h^\dagger})
= \boldsymbol{0}.
\end{equation}
Taking the inner product of \eqref{eq:FOC1} with $\boldsymbol{p}_{t,h^\dagger}-\boldsymbol{p}_{t+1,h^\dagger}$ yields
\begin{equation}
\label{eq:ip-start1}
\langle \boldsymbol{p}_{t,h^\dagger}-\boldsymbol{p}_{t+1,h^\dagger},\,{\bm{l}}_{t,h^\dagger}''\rangle
=
\big\langle \boldsymbol{p}_{t,h^\dagger}-\boldsymbol{p}_{t+1,h^\dagger},\,
\nabla F_{\xi(h^\dagger)}(\boldsymbol{p}_{t,h^\dagger})-\nabla F_{\xi(h^\dagger)}(\tilde{\boldsymbol{p}}_{t+1,h^\dagger})\big\rangle.
\end{equation}
From the above, by using the definition of Bregman divergences, we have
\begin{align}\label{eq:brag_bd1}
\langle \boldsymbol{p}_{t,h^\dagger} - \boldsymbol{p}_{t+1,h^\dagger},\, {\bm{l}}_{t,h^\dagger}'' \rangle
&= 
\big\langle \boldsymbol{p}_{t,h^\dagger} - \boldsymbol{p}_{t+1,h^\dagger},\,
\nabla F_{\xi(h^\dagger)}(\boldsymbol{p}_{t,h^\dagger}) - \nabla F_{\xi(h^\dagger)}(\tilde{\boldsymbol{p}}_{t+1,h^\dagger}) \big\rangle \cr
&= 
D_{F_{\xi(h^\dagger)}}(\boldsymbol{p}_{t+1,h^\dagger}, \boldsymbol{p}_{t,h^\dagger})
+ D_{F_{\xi(h^\dagger)}}(\boldsymbol{p}_{t,h^\dagger}, \tilde{\boldsymbol{p}}_{t+1,h^\dagger})
- D_{F_{\xi(h^\dagger)}}(\boldsymbol{p}_{t+1,h^\dagger}, \tilde{\boldsymbol{p}}_{t+1,h^\dagger})
 \cr
&\le 
D_{F_{\xi(h^\dagger)}}(\boldsymbol{p}_{t+1,h^\dagger}, \boldsymbol{p}_{t,h^\dagger})
+ D_{F_{\xi(h^\dagger)}}(\boldsymbol{p}_{t,h^\dagger}, \tilde{\boldsymbol{p}}_{t+1,h^\dagger}).
\end{align}
Then from \eqref{eq:decomp-pt1} and \eqref{eq:brag_bd1}, we have
\[
\langle \bm{p}_{t,h^\dagger}-\bm{p},{\bm{l}}_{t,h^\dagger}''\rangle
\le
D_{F_{\xi(h^\dagger)}}(\bm{p},\bm{p}_{t,h^\dagger})
- D_{F_{\xi(h^\dagger)}}(\bm{p},{\bm{p}}_{t+1,h^\dagger})
+ D_{F_{\xi(h^\dagger)}}(\bm{p}_{t,h^\dagger},\tilde{\bm{p}}_{t+1,h^\dagger}).
\]

Summing over $t=t_s,\dots,t_{s+1}-1$ gives a telescoping series in the middle terms:
\[
\sum_{t=t_s}^{t_{s+1}-1} \langle \bm{p}_{t,h^\dagger}-\bm{p},{\bm{l}}_{t,h^\dagger}''\rangle
\le
D_{F_{\xi(h^\dagger)}}(\bm{p},\bm{p}_{t_s,h^\dagger})
- D_{F_{\xi(h^\dagger)}}(\bm{p},{\bm{p}}_{t_{s+1},h^\dagger})
+ \sum_{t=t_s}^{t_{s+1}} D_{F_{\xi(h^\dagger)}}(\bm{p}_{t,h^\dagger},\tilde{\bm{p}}_{t+1,h^\dagger}).
\]
Since $D_{F_{\xi(h^\dagger)}}(\boldsymbol{p},\boldsymbol{p}_{t_{s+1},h^\dagger}) \ge 0$ by the nonnegativity of Bregman divergences, the term can be safely dropped. 
Therefore,
\[
\sum_{t=t_s}^{t_{s+1}-1}\langle \bm{p}_{t,h^\dagger}-\bm{p},{\bm{l}}_{t,h^\dagger}''\rangle
\le
D_{F_{\xi(h^\dagger)}}(\boldsymbol{p},\boldsymbol{p}_{t_s,h^\dagger})
+\sum_{t=t_s}^{t_{s+1}-1} D_{F_{\xi(h^\dagger)}}(\bm{p}_{t,h^\dagger},\tilde{\bm{p}}_{t+1,h^\dagger}),
\]
which concludes the proof.
\end{proof}
In Lemma~\ref{app_lem:regret_1_step2}, the first term is for the initial point diameter at time $t_s$ and the second term is for the divergence of the updated policy. Using the definition of the Bregman divergence and the fact that $p_{t_s,h^\dagger}(k)\ge \beta$, the initial point diameter term can be shown to be bounded as follows:  
\begin{align}
D_{F_{\xi(h^\dagger)}}(\textit{\textbf{p}},\textit{\textbf{p}}_{t_s,h^\dagger})&\le \frac{1}{\xi(h^\dagger)}\sum_{k\in[K]}p(k)\log (1/p_{t_s,h^\dagger}(k))\cr &\le \frac{\log(1/\beta)}{\xi(h^\dagger)}.\label{app_eq:regret_1_step3}
\end{align}
 Next, for the updated policy divergence term, using $\tilde{p}_{t+1,h^\dagger}(k)=p_{t,h^\dagger}(k)\exp(-\xi(h^\dagger) l_{t,h^\dagger}^{\prime\prime}(k))$ for all $k\in[K]$, we have
    \begin{align}
     &\sum_{t=t_s}^{t_{s+1}-1}\mathbb{E}\left[D_{F_{\xi(h^\dagger)}}(\textit{\textbf{p}}_{t,h^\dagger},\tilde{\textit{\textbf{p}}}_{t+1,h^\dagger})\right]\cr 
        &= \sum_{t=t_s}^{t_{s+1}-1}\sum_{k=1}^K\mathbb{E}\left[\frac{1}{\xi(h^\dagger)} p_{t,h^\dagger}(k)\left(\exp(-\xi({h^\dagger})l_{t,h^\dagger}^{\prime\prime}(k))-1+\xi({h^\dagger})l_{t,h^\dagger}^{\prime\prime}(k)\right)\right]\cr
        &\le \sum_{t=t_s}^{t_{s+1}-1}\sum_{k=1}^K \mathbb{E}\left[\frac{\xi(h^\dagger)}{2}p_{t,h^\dagger}(k)l_{t,h^\dagger}^{\prime\prime}(k)^2\right]\cr 
        &\le \sum_{t=t_s}^{t_{s+1}-1}\sum_{k=1}^K\mathbb{E}\left[\frac{\xi(h^\dagger)}{2p_t(h^\dagger)}\right]\le \frac{\xi({h^\dagger})KT_s}{2\alpha},\label{app_eq:regret_1_step4}
    \end{align}
    where the first inequality comes from $\exp(-x)\le 1-x+x^2/2$ for all $x\ge 0$, the second inequality comes from $\mathbb{E}[l_{t,h^\dagger}^{\prime\prime}(k)^2\mid p_{t,h^\dagger}(k),p_t(h^\dagger)]\le 1/(p_t(h^\dagger)p_{t,h^\dagger}(k))$, and the last inequality is obtained from $p_t(h^\dagger)\ge \alpha$ from the clipped domain. We can observe that the clipped domain controls the variance of estimators.
    Then from \eqref{app_eq:regret_1_step1},  Lemma~\ref{app_lem:regret_1_step2}, \eqref{app_eq:regret_1_step3}, and \eqref{app_eq:regret_1_step4}, by summing up over $s\in[S]$,  we have
    \begin{align}
    &\sum_{t=1}^T\mathbb{E}\left[l_t(a_{t, h^\dagger})\right]-\sum_{s=0}^S\min_{k_s\in[K]}\sum_{t=t_s}^{t_{s+1}-1}l_t(k_s)\le \beta T(K-1)+\frac{S\log(1/\beta)}{\xi({h^\dagger})}+\frac{\xi({h^\dagger})KT}{2\alpha}.\label{app_eq:1_result1}
\end{align}

Next, we provide a bound for the following regret from the master:
\begin{align*}
    \sum_{t=1}^T\mathbb{E}\left[l_t(a_{t, h_t})\right]-\sum_{t=1}^T\mathbb{E}\left[l_t(a_{t, h^\dagger})\right].
\end{align*}
Let 
$\tilde{\textit{\textbf{p}}}_{t+1}=\argmin_{\textit{\textbf{p}}\in\mathbb{R}^H}\langle \textit{\textbf{p}},\bm{l}_{t}'\rangle+D_{F_{\eta}}(\textit{\textbf{p}},\textit{\textbf{p}}_{t})$ and $\textit{\textbf{e}}_{h,H}$ denote the unit vector with 1 at base $h$-index and $0$ at the rest of $H-1$ indices. For ease of presentation, we define $\tilde{l}_t(h)=l_t(a_{t,h})$ for $h\in[H]$.
Then, we have
\begin{align}
    \sum_{t=1}^{T}\mathbb{E}\left[l_t(a_{t, h_t})-l_t(a_{t, h^\dagger})\right]&=\sum_{t=1}^{T}\mathbb{E}\left[\langle \textit{\textbf{p}}_{t}-\textit{\textbf{e}}_{h^\dagger,H},\tilde{\textit{\textbf{l}}}_t\rangle\right]\cr
    &\le \max_{\textit{\textbf{p}}\in\mathcal{P}_{H}^\alpha}\mathbb{E}\left[\sum_{t=1}^{T}\langle \textit{\textbf{p}}-\textit{\textbf{e}}_{h^\dagger,H},\tilde{\textit{\textbf{l}}}_t\rangle+\sum_{t=1}^{T}\langle \textit{\textbf{p}}_{t}-\textit{\textbf{p}},\tilde{\textit{\textbf{l}}}_{t}\rangle\right]
    \cr&\le \alpha T(H-1)+\max_{\textit{\textbf{p}}\in\mathcal{P}_{H}^\alpha}\mathbb{E}\left[\sum_{t=1}^{T}\langle \textit{\textbf{p}}_{t}-\textit{\textbf{p}},\tilde{\textit{\textbf{l}}}_{t}\rangle\right].
    \label{app_eq:regret_2_step1}
    \end{align}
    
   For bounding the second term in \eqref{app_eq:regret_2_step1}, we use the following lemma.
\begin{lemma}[Theorem 28.4 and Eq. 28.11 in \cite{lattimore2020bandit}]  For any $\textit{\textbf{p}}\in\mathcal{P}_{H}^\alpha$ we have
    \begin{align*}
\mathbb{E}\left[\sum_{t=1}^{T}\langle \textit{\textbf{p}}_{t}-\textit{\textbf{p}},\tilde{\textit{\textbf{l}}}_{t}\rangle\right]\le \mathbb{E}\left[D_{F_\eta}(\boldsymbol{p},\boldsymbol{p}_1)+\sum_{t=1}^T D_{F_\eta}(\textbf{\textit{p}}_t,\tilde{\textbf{\textit{p}}}_{t+1})\right].
    \end{align*}\label{app_lem:l_sum_bd_F-D}
\end{lemma}
\begin{proof}
    For completeness, we provide a proof for this lemma.
    Fix any $\bm{p}\in\mathcal{P}_H^\alpha$. 
By the first-order optimality condition of the unconstrained mirror-descent step, 
\[
\langle {\bm{l}}_t' + \nabla F_\eta({\bm{p}}_{t+1}) - \nabla F_\eta(\bm{p}_t),
\, \bm{p}-{\bm{p}}_{t+1} \rangle \ge 0.
\]
This gives
% Setting $\bm{q}=\bm{p}$ gives
\[
\langle {\bm{p}}_{t+1}-\bm{p},{\bm{l}}_t'\rangle
\le
\langle \nabla F_\eta({\bm{p}}_{t+1})-\nabla F_\eta(\bm{p}_t),\,\bm{p}-{\bm{p}}_{t+1}\rangle.
\]
Using the three-point identity for Bregman divergences,
\[
\langle \nabla F_\eta({\bm{p}}_{t+1})-\nabla F_\eta(\bm{p}_t),\,\bm{p}-{\bm{p}}_{t+1}\rangle
= D_{F_\eta}(\bm{p},\bm{p}_t)
- D_{F_\eta}(\bm{p},{\bm{p}}_{t+1})
- D_{F_\eta}({\bm{p}}_{t+1},\bm{p}_t),
\]
we obtain
\begin{equation}
\label{eq:ineq-tilde-step}
\langle {\bm{p}}_{t+1}-\bm{p},{\bm{l}}_t'\rangle
\le
D_{F_\eta}(\bm{p},\bm{p}_t)
- D_{F_\eta}(\bm{p},{\bm{p}}_{t+1})
- D_{F_\eta}({\bm{p}}_{t+1},\bm{p}_t).
\end{equation}

We now decompose
\[
\langle \bm{p}_t-\bm{p},{\bm{l}}_t'\rangle
= \langle \bm{p}_t-{\bm{p}}_{t+1},{\bm{l}}_t'\rangle
  + \langle {\bm{p}}_{t+1}-\bm{p},{\bm{l}}_t'\rangle.
\]
Combining with \eqref{eq:ineq-tilde-step} yields
\begin{align}
\langle \bm{p}_t-\bm{p},{\bm{l}}_t'\rangle
&\le
\langle \bm{p}_t-{\bm{p}}_{t+1},{\bm{l}}_t'\rangle
- D_{F_\eta}({\bm{p}}_{t+1},\bm{p}_t)
+ D_{F_\eta}(\bm{p},\bm{p}_t)
- D_{F_\eta}(\bm{p},{\bm{p}}_{t+1}).
\label{eq:decomp-pt}
\end{align}
Recall the unconstrained mirror step
\[
\tilde{\boldsymbol{p}}_{t+1}
= \arg\min_{\boldsymbol{u}}
\Bigl\{\,
\langle {\bm{l}}_t',\boldsymbol{u}\rangle
+ D_{F_\eta}(\boldsymbol{u},\boldsymbol{p}_t)
\Bigr\}.
\]
By the first-order optimality condition,
\begin{equation}
\label{eq:FOC}
{\bm{l}}_t'
+ \nabla F_\eta(\tilde{\boldsymbol{p}}_{t+1})
- \nabla F_\eta(\boldsymbol{p}_t)
= \boldsymbol{0}.
\end{equation}
Taking the inner product of \eqref{eq:FOC} with $\boldsymbol{p}_t-\boldsymbol{p}_{t+1}$ yields
\begin{equation}
\label{eq:ip-start}
\langle \boldsymbol{p}_t-\boldsymbol{p}_{t+1},\,{\bm{l}}_t'\rangle
=
\big\langle \boldsymbol{p}_t-\boldsymbol{p}_{t+1},\,
\nabla F_\eta(\boldsymbol{p}_t)-\nabla F_\eta(\tilde{\boldsymbol{p}}_{t+1})\big\rangle.
\end{equation}
From the above, by using the definition of Bregman divergences, we have
\begin{align}\label{eq:brag_bd}
\langle \boldsymbol{p}_t - \boldsymbol{p}_{t+1},\, {\bm{l}}_t' \rangle
&= 
\big\langle \boldsymbol{p}_t - \boldsymbol{p}_{t+1},\,
\nabla F_\eta(\boldsymbol{p}_t) - \nabla F_\eta(\tilde{\boldsymbol{p}}_{t+1}) \big\rangle \cr
&= 
D_{F_\eta}(\boldsymbol{p}_{t+1}, \boldsymbol{p}_t)
+ D_{F_\eta}(\boldsymbol{p}_t, \tilde{\boldsymbol{p}}_{t+1})
- D_{F_\eta}(\boldsymbol{p}_{t+1}, \tilde{\boldsymbol{p}}_{t+1})
 \cr
&\le 
D_{F_\eta}(\boldsymbol{p}_{t+1}, \boldsymbol{p}_t)
+ D_{F_\eta}(\boldsymbol{p}_t, \tilde{\boldsymbol{p}}_{t+1}).
\end{align}
Then from \eqref{eq:decomp-pt} and \eqref{eq:brag_bd}, we have
\[
\langle \bm{p}_t-\bm{p},{\bm{l}}_t'\rangle
\le
D_{F_\eta}(\bm{p},\bm{p}_t)
- D_{F_\eta}(\bm{p},{\bm{p}}_{t+1})
+ D_{F_\eta}(\bm{p}_t,\tilde{\bm{p}}_{t+1}).
\]

Summing over $t=1,\dots,T$ gives a telescoping series in the middle terms:
\[
\sum_{t=1}^T \langle \bm{p}_t-\bm{p},{\bm{l}}_t'\rangle
\le
D_{F_\eta}(\bm{p},\bm{p}_1)
- D_{F_\eta}(\bm{p},{\bm{p}}_{T+1})
+ \sum_{t=1}^T D_{F_\eta}(\bm{p}_t,\tilde{\bm{p}}_{t+1}).
\]
Since $D_{F_\eta}(\boldsymbol{p},\boldsymbol{p}_{T+1}) \ge 0$ by the nonnegativity of Bregman divergences, the term can be safely dropped. 
\iffalse
Next, consider the first term $D_{F_\eta}(\boldsymbol{p},\boldsymbol{p}_1)$.
By definition of the Bregman divergence,
\[
D_{F_\eta}(\boldsymbol{p},\boldsymbol{p}_1)
= F_\eta(\boldsymbol{p}) - F_\eta(\boldsymbol{p}_1)
- \langle\nabla F_\eta(\boldsymbol{p}_1),\,\boldsymbol{p}-\boldsymbol{p}_1\rangle.
\]
When $\boldsymbol{p}_1$ is chosen as the minimizer of $F_\eta$ over the clipped simplex $\mathcal{P}_H^\alpha$, that is,
\[
\boldsymbol{p}_1 \in \arg\min_{\boldsymbol{q}\in\mathcal{P}_H^\alpha} F_\eta(\boldsymbol{q}),
\]
it satisfies the first-order optimality condition for convex constrained minimization:
\[
\langle\nabla F_\eta(\boldsymbol{p}_1),\, \boldsymbol{p}-\boldsymbol{p}_1\rangle \ge 0,
\qquad \forall\,\boldsymbol{p}\in\mathcal{P}_H^\alpha.
\]
Consequently, the inner product term in the definition of $D_{F_\eta}(\boldsymbol{p},\boldsymbol{p}_1)$ is nonnegative, and hence
\[
D_{F_\eta}(\boldsymbol{p},\boldsymbol{p}_1)
\le F_\eta(\boldsymbol{p}) - F_\eta(\boldsymbol{p}_1).
\]
\fi
% This inequality simply reflects the fact that at the minimizer $\boldsymbol{p}_1$, the supporting hyperplane to $F_\eta$ over $\mathcal{P}_H^\alpha$
% lies below the graph of $F_\eta$, as ensured by convexity.
% Since $D_{F_\eta}(\bm{p},{\bm{p}}_{T+1})\ge0$, we can drop it, and note that if
% $\bm{p}_1$ minimizes $F_\eta$ over $\mathcal{P}_H^\alpha$, then
% $D_{F_\eta}(\bm{p},\bm{p}_1)
% = F_\eta(\bm{p})-F_\eta(\bm{p}_1)
% - \langle\nabla F_\eta(\bm{p}_1),\bm{p}-\bm{p}_1\rangle
% \le F_\eta(\bm{p})-F_\eta(\bm{p}_1)$.
Therefore,
\[
\sum_{t=1}^{T}\langle \bm{p}_t-\bm{p},{\bm{l}}_t'\rangle
\le
D_{F_\eta}(\boldsymbol{p},\boldsymbol{p}_1)
+\sum_{t=1}^T D_{F_\eta}(\bm{p}_t,\tilde{\bm{p}}_{t+1}),
\]
which concludes the proof with $\mathbb{E}[\sum_{t=1}^{T}\langle \bm{p}_t-\bm{p},{\bm{l}}_t'\rangle]=\mathbb{E}[\sum_{t=1}^{T}\langle \bm{p}_t-\bm{p},\tilde{\bm{l}}_t\rangle]$.
\end{proof}

 From \eqref{app_eq:regret_2_step1} and Lemma~\ref{app_lem:l_sum_bd_F-D}, we have
\begin{align}
&\sum_{t=1}^T\mathbb{E}\left[l_t(a_{t, h_t})\right]-\sum_{t=1}^T\mathbb{E}\left[l_t(a_{t, h^\dagger})\right]\cr&\le \alpha T(H-1)+\max_{\textit{\textbf{p}}\in\mathcal{P}_{H}^\alpha}\mathbb{E}\left[ D_{F_\eta}(\boldsymbol{p},\boldsymbol{p}_1)+\sum_{t=1}^T D_{F_\eta}(\textbf{\textit{p}}_t,\tilde{\textbf{\textit{p}}}_{t+1})\right]\cr  & 
    \le \alpha T(H-1)+\frac{\log(H)}{\eta}+\frac{\eta TH}{2}\label{app_eq:1_result2},
\end{align}
where the last inequality is obtained from the fact that 
\begin{align*}
   D_{F_\eta}(\boldsymbol{p},\boldsymbol{p}_1)=\frac{1}{\eta}\sum_{h\in[H]}p(h)\log(\frac{p(h)}{p_1(h)})\le \frac{\log(H)}{\eta} 
\end{align*}
and from 
\begin{align*}
    \mathbb{E}\left[\sum_{t=1}^T D_{F_\eta}(\textbf{\textit{p}}_t,\tilde{\textbf{\textit{p}}}_{t+1})\right]&=\mathbb{E}\left[(1/\eta)\sum_{t=1}^T \sum_{h\in[H]} p_t(h)(\exp(-\eta l_t'(h))-1+\eta l_t'(h) )\right]\cr &\le \frac{\eta}{2}\mathbb{E}\left[\sum_{t=1}^T\sum_{h\in[H]}p_t(h)l'_t(h)^2\right]\le \frac{\eta TH}{2}.
\end{align*}
% From Theorem 28.4 and problem 28.10 in \cite{lattimore2020bandit} we can show that 
% \begin{align}
%     &\sum_{t=1}^T\mathbb{E}\left[l_t(a_{t, h_t})\right]-\sum_{t=1}^T\mathbb{E}\left[l_t(a_{t, h^\dagger})\right]\cr & \quad
%     \le \alpha TH+\frac{\log(1/\alpha)}{\eta}+\frac{\eta TK},{2}\label{app_eq:1_result2}
% \end{align}
% where the last inequality is obtained from the fact that \begin{align*}
%     F_{\eta}(p_t)
% \end{align*}

Therefore, putting \eqref{app_eq:1_decom}, \eqref{app_eq:1_result1}, and \eqref{app_eq:1_result2} altogether, we have
\begin{align*}
    R_S(T)&=
    \sum_{t=1}^T\mathbb{E}\left[l_t(a_t)\right]-\sum_{s=0}^S\min_{1\le k_s\le K}\sum_{t=T_s}^{T_{s+1}-1}l_t(k_s)\cr &\le \alpha TH+\frac{\log(H)}{\eta}+\frac{\eta TH}{2}+  \beta T(K-1)+\frac{S\log(1/\beta)}{\xi({h^\dagger})}+\frac{\xi({h^\dagger})KT}{2\alpha}\cr
    &=\tilde{O}(S^{1/2}T^{2/3}K^{1/3}),
\end{align*} where $\alpha=K^{1/3}/(T^{1/3}H^{1/2})$, $\beta=1/(KT)$, $\eta=1/\sqrt{TH}$, $\xi(h^\dagger)={(h^\dagger)}^{1/2}/(K^{1/3}T^{2/3})$, $h^\dagger=\Theta(S)$, and $H=\log(T)$. This concludes the proof.

\subsection{Proof of Theorem~\ref{thm:alg2}}\label{app:thm2}

Let $t_s$ be the time when the $s$-th switch of the best arm happens and $t_{S+1}-1=T$, $t_0=1$. Also let $t_{s+1}-t_s=T_s$. For any $t_s$ for all $s\in[0,S]$, the $S$-switch regret can be expressed as 
\begin{align}
    R_S(T)&=\sum_{t=1}^T\mathbb{E}\left[l_t(a_t)\right]-\sum_{s=0}^S\min_{k_s\in[K]}\sum_{t=t_s}^{t_{s+1}-1}l_t(k_s)\cr&=\sum_{t=1}^T\mathbb{E}\left[l_t(a_{t, h_t})\right]-\sum_{t=1}^T\mathbb{E}\left[l_t(a_{t, h^\dagger})\right]+\sum_{t=1}^T\mathbb{E}\left[l_t(a_{t, h^\dagger})\right]-\sum_{s=0}^S\min_{k_s\in[K]}\sum_{t=t_s}^{t_{s+1}-1}l_t(k_s),\label{app_eq:2_decom}
\end{align}
in which the first two terms are closely related with the regret from the master algorithm against the near optimal base $h^\dagger$, and the remaining terms are related with the regret from $h^\dagger$ base algorithm against the best arms in hindsight.
%  $$\sum_{t=1}^T\mathbb{E}\left[l_t(a_{t, h_t})\right]-\sum_{t=1}^T\mathbb{E}\left[l_t(a_{t, h^\dagger})\right]$$ and $$\sum_{t=1}^T\mathbb{E}\left[l_t(a_{t, h^\dagger})\right]-\sum_{s=0}^S\min_{1\le k_s\le K}\sum_{t=t_s}^{t_{s+1}-1}l_t(k_s).$$
% \begin{align}
%     &\sum_{t=1}^T\mathbb{E}\left[l_t(a_t)\right]-\sum_{s=0}^S\min_{1\le k_s\le K}\sum_{t=t_s}^{t_{s+1}-1}l_t(k_s)\cr&=\sum_{t=1}^T\mathbb{E}\left[l_t(a_{t, h_t})\right]-\sum_{t=1}^T\mathbb{E}\left[l_t(a_{t, h^\dagger})\right]\cr &+\sum_{t=1}^T\mathbb{E}\left[l_t(a_{t, h^\dagger})\right]-\sum_{s=0}^S\min_{1\le k_s\le K}\sum_{t=t_s}^{t_{s+1}-1}l_t(k_s).\label{app_eq:2_decom}
% \end{align}

First we provide a bound for the following regret from base $h^\dagger$. From \eqref{app_eq:regret_1_step1}, we can obtain
\begin{align}
    \sum_{t=t_s}^{t_{s+1}-1}\mathbb{E}\left[l_t(a_{t, h^\dagger})\right]-\min_{k_s\in[K]}\sum_{t=t_s}^{t_{s+1}-1}l_t(k_s)\le \beta T_sK+\max_{p\in\mathcal{P}_{K}^\beta }\mathbb{E}\left[\sum_{t=t_s}^{t_{s+1}-1}\langle \textit{\textbf{p}}_{t,h^\dagger}-\textit{\textbf{p}},{\bm{l}}_{t,h^\dagger}''\rangle\right].
    \label{app_eq:regret_2_step1_2}
    \end{align}
 Then for the second term of the last inequality in \eqref{app_eq:regret_2_step1_2}, we provide a following lemma.
\begin{lemma}[Restatement of Lemma~\ref{lem:regret_2_step1}]
 For any $\textit{\textbf{p}}\in\mathcal{P}_{K}^\beta$ we can show that
\begin{align*}
&\sum_{t=t_s}^{t_{s+1}-1}\mathbb{E}\left[\langle \textit{\textbf{p}}_{t,h^\dagger}-\textit{\textbf{p}},{\bm{l}}_{t,h^\dagger}''\rangle\right] \le\mathbb{E}\left[2\log(1/\beta)\sqrt{\frac{KT\rho_T(h^\dagger)}{h^\dagger}}+\frac{T_s}{2} \sqrt{\frac{SK\rho_T(h^\dagger)}{T}}\right].
\end{align*}\label{app_lem:regret_2_step1}
\end{lemma}
\begin{proof}
For ease of presentation, we define the negative entropy regularizer without a learning rate as $$F(\textit{\textbf{p}})=\sum_{i=1}^K(p(i)\log p(i)-p(i))$$ and define learning rate $\xi_0(h^\dagger)=\infty.$
From the first-order optimality condition for $\bm{p}_{t+1,h^\dagger}$ and using the definition of the Bregman divergence,
\begin{align}
    &\langle \textit{\textbf{p}}_{t+1,h^\dagger}-\textit{\textbf{p}},\bm{l}_{t,h^\dagger}''\rangle \cr&\le \frac{1}{\xi_t(h^\dagger)}\langle \textit{\textbf{p}}-\textit{\textbf{p}}_{t+1,h^\dagger}, \nabla F(\textit{\textbf{p}}_{t+1,h^\dagger})-\nabla F(\textit{\textbf{p}}_{t,h^\dagger}) \rangle \cr 
    &=\frac{1}{\xi_t(h^\dagger)}\left(D_F(\textit{\textbf{p}},\textit{\textbf{p}}_{t,h^\dagger})-D_F(\textit{\textbf{p}},\textit{\textbf{p}}_{t+1,h^\dagger})-D_F(\textit{\textbf{p}}_{t+1,h^\dagger},\textit{\textbf{p}}_{t,h^\dagger})\right).\label{app_eq:app_step1}
\end{align}
Also, we have
\begin{align}
    &\langle \textit{\textbf{p}}_{t,h^\dagger}-\textit{\textbf{p}}_{t+1,h^\dagger},\bm{l}_{t,h^\dagger}'' \rangle  \cr&=\frac{1}{\xi_t(h^\dagger)}\langle \textit{\textbf{p}}_{t,h^\dagger}-\textit{\textbf{p}}_{t+1,h^\dagger},\nabla F(\textit{\textbf{p}}_{t,h^\dagger})-\nabla F(\tilde{\textit{\textbf{p}}}_{t+1,h^\dagger})\rangle\cr 
    &=\frac{1}{\xi_t(h^\dagger)}(D_F(\textit{\textbf{p}}_{t+1,h^\dagger},\textit{\textbf{p}}_{t,h^\dagger})+D_F(\textit{\textbf{p}}_{t,h^\dagger},\tilde{\textit{\textbf{p}}}_{t+1,h^\dagger})-D_F(\textit{\textbf{p}}_{t+1,h^\dagger},\tilde{\textit{\textbf{p}}}_{t+1,h^\dagger}))\cr
    &\le \frac{1}{\xi_t(h^\dagger)}(D_F(\textit{\textbf{p}}_{t+1,h^\dagger},\textit{\textbf{p}}_{t,h^\dagger})+D_F(\textit{\textbf{p}}_{t,h^\dagger},\tilde{\textit{\textbf{p}}}_{t+1,h^\dagger}))\label{app_eq:app_step2}.
\end{align}
{
Then, we can obtain
\begin{align}\sum_{t=t_s}^{t_{s+1}-1}\langle \textit{\textbf{p}}_{t,h^\dagger}-\textit{\textbf{p}},\bm{l}_{t,h^\dagger}''\rangle &\le
    \sum_{t=t_s}^{t_{s+1}-1}\langle \textit{\textbf{p}}_{t,h^\dagger}-\textit{\textbf{p}}_{t+1,h^\dagger},\bm{l}_{t,h^\dagger}''\rangle\cr &\qquad+\sum_{t=t_s}^{t_{s+1}-1}\frac{1}{\xi_t(h^\dagger)}\left(D_F(\textit{\textbf{p}},\textit{\textbf{p}}_{t,h^\dagger})-D_F(\textit{\textbf{p}},\textit{\textbf{p}}_{t+1,h^\dagger})-D_F(\textit{\textbf{p}}_{t+1,h^\dagger},\textit{\textbf{p}}_{t,h^\dagger})\right) \cr &=\sum_{t=t_s}^{t_{s+1}-1}\langle \textit{\textbf{p}}_{t,h^\dagger}-\textit{\textbf{p}}_{t+1,h^\dagger},\bm{l}_{t,h^\dagger}''\rangle+ \sum_{t=t_s+1}^{t_{s+1}-1}D_{F}(\textit{\textbf{p}},\textit{\textbf{p}}_{t,h^\dagger})\left(\frac{1}{\xi_t(h^\dagger)}-\frac{1}{\xi_{t-1}(h^\dagger)}\right) 
    \cr &\qquad +\frac{1}{\xi_{t_s}(h^\dagger)}D_F(\textit{\textbf{p}},\textit{\textbf{p}}_{t_s,h^\dagger})-\frac{1}{\xi_{t_{s+1}-1}(h)}D_F(\textit{\textbf{p}},\textit{\textbf{p}}_{t_{s+1},h^\dagger}) -\sum_{t=t_s}^{t_{s+1}-1}\frac{1}{\xi_t(h^\dagger)}D_F(\textit{\textbf{p}}_{t+1,h^\dagger},\textit{\textbf{p}}_{t,h^\dagger})
  \cr
    &\le
\sum_{t=t_s}^{t_{s+1}-1}\langle \textit{\textbf{p}}_{t,h^\dagger}-\textit{\textbf{p}}_{t+1,h^\dagger},\bm{l}_{t,h^\dagger}''\rangle+ \log(1/\beta)\sum_{t=t_s+1}^{t_{s+1}-1}\left(\frac{1}{\xi_t(h^\dagger)}-\frac{1}{\xi_{t-1}(h^\dagger)}\right) 
    \cr &\qquad +\frac{1}{\xi_{t_s}(h^\dagger)}D_F(\textit{\textbf{p}},\textit{\textbf{p}}_{t_s,h^\dagger})-\frac{1}{\xi_{t_{s+1}-1}(h)}D_F(\textit{\textbf{p}},\textit{\textbf{p}}_{t_{s+1},h^\dagger}) -\sum_{t=t_s}^{t_{s+1}-1}\frac{1}{\xi_t(h^\dagger)}D_F(\textit{\textbf{p}}_{t+1,h^\dagger},\textit{\textbf{p}}_{t,h^\dagger})
    \cr&\le2\frac{\log(1/\beta)}{\xi_T(h^\dagger)}+\sum_{t=t_s}^{t_{s+1}-1}\frac{D_{F}(\textit{\textbf{p}}_{t,h^\dagger},\tilde{\textit{\textbf{p}}}_{t+1,h^\dagger})}{\xi_t(h^\dagger)}
    \cr&=2\log(1/\beta)\sqrt{\frac{KT\rho_T(h^\dagger)}{h^\dagger}}+\sum_{t=t_s}^{t_{s+1}-1}\frac{D_{F}(\textit{\textbf{p}}_{t,h^\dagger},\tilde{\textit{\textbf{p}}}_{t+1,h^\dagger})}{\xi_t(h^\dagger)},\label{app_eq:app_2_step2}
\end{align}
where the first inequality is obtained from \eqref{app_eq:app_step1}, the second last inequality is obtained from $D_F(\textit{\textbf{p}},\textit{\textbf{p}}_{t,h^\dagger})\le \log(1/\beta)$ and $1/\xi_t(h^\dagger)\ge 1/\xi_{t-1}(h^\dagger)$ from non-decreasing $\rho_t(h^\dagger)$, and the last inequality is obtained from \eqref{app_eq:app_step2}, $D_F(\textit{\textbf{p}},\textit{\textbf{p}}_{t_s,h^\dagger})\le \log(1/\beta)$, and $\xi_s(h^\dagger)\ge \xi_T(h^\dagger)$ for $s\in[T]$ from non-decreasing $\rho_s(h^\dagger)$.}

For the second term in the last inequality in \eqref{app_eq:app_2_step2}, using $\tilde{p}_{t+1,h^\dagger}(k)=p_{t,h^\dagger}(k)\exp(-\xi(h^\dagger) l_{t,h^\dagger}^{\prime\prime}(k))$ for all $k\in[K]$, we have
    \begin{align}\sum_{t=t_s}^{t_{s+1}-1}\mathbb{E}\left[\frac{D_{F}(\textit{\textbf{p}}_{t,h^\dagger},\tilde{\textit{\textbf{p}}}_{t+1,h^\dagger})}{\xi_t(h^\dagger)}\right] &= \sum_{t=t_s}^{t_{s+1}-1}\sum_{k=1}^K\mathbb{E}\left[\frac{1}{\xi_t(h^\dagger)} p_{t,h^\dagger}(k)\left(\exp(-\xi_t({h^\dagger})l_{t,h^\dagger}^{\prime\prime}(k))\right.\right.\cr  &\qquad\qquad\qquad\qquad\qquad\qquad\qquad\left.\left.-1+\xi_t({h^\dagger})l_{t,h^\dagger}^{\prime\prime}(k)\right)\right]\cr
        &\le \sum_{t=t_s}^{t_{s+1}-1}\sum_{k=1}^K \mathbb{E}\left[\frac{\xi_t(h^\dagger)}{2}p_{t,h^\dagger}(k)l_{t,h^\dagger}^{\prime\prime}(k)^2\right]  \cr 
        &\le \sum_{t=t_s}^{t_{s+1}-1}\sum_{k=1}^K\mathbb{E}\left[\frac{\xi_t(h^\dagger)}{2p_t(h^\dagger)}\right]\cr 
    &\le \sum_{t=t_s}^{t_{s+1}-1}\sum_{k=1}^K\mathbb{E}\left[\frac{\xi_t(h^\dagger)\rho_t(h^\dagger)}{2}\right]  \cr    &\le \sum_{t=t_s}^{t_{s+1}-1}\sum_{k=1}^K\mathbb{E}\left[\frac{1}{2}\sqrt{\frac{h^\dagger\rho_t(h^\dagger)}{KT}}\right]  
        \cr& \le T_s \sqrt{\frac{h^\dagger K}{T}}\frac{\mathbb{E}\left[\rho_T(h^\dagger)^{1/2}\right]}{2},\label{app_eq:regret_2_step4}
    \end{align}
where the first inequality comes from $\exp(-x)\le 1-x+x^2/2$ for all $x\ge 0$, the second inequality comes from $\mathbb{E}[l_{t,h^\dagger}^{\prime\prime}(k)^2\mid p_{t,h^\dagger}(k),p_t(h^\dagger)]\le 1/(p_t(h^\dagger)p_{t,h^\dagger}(k))$, and the third inequality is obtained from $1/p_t(h^\dagger)\le \rho_t(h^\dagger)$.

% The proof is deferred to Appendix~\ref{app:proof_lem_2_step1}
\end{proof}
 Then from \eqref{app_eq:regret_2_step1_2} and Lemma~\ref{app_lem:regret_2_step1}, we have

\begin{align}
    &\sum_{t=1}^T\mathbb{E}\left[l_t(a_{t, h^\dagger})\right]-\sum_{s=0}^S\min_{1\le k_s\le K}\sum_{t=T_s}^{T_{s+1}-1}l_t(k_s)\cr
    &\le \beta T(K-1)+\mathbb{E}\left[2S\log(1/\beta)\sqrt{\frac{KT\rho_T(h^\dagger)}{h^\dagger}}+\frac{1}{2} \sqrt{TSK\rho_T(h^\dagger)}\right].\label{app_eq:2_step2}
\end{align}
Next, we provide a bound for the  regret from the master in the following lemma.

\begin{lemma}[Lemma 13 in \cite{agarwal2017corralling}]
\begin{align*}
\sum_{t=1}^T\mathbb{E}\left[l_t(a_{t, h_t})\right]-\sum_{t=1}^T\mathbb{E}\left[l_t(a_{t, h^\dagger})\right]
     &\le 
    O\left(\frac{H\log (T)}{\eta}+T\eta\right)-\mathbb{E}\left[\frac{\rho_T(h^\dagger)}{40\eta \log T}\right]+\alpha T(H-1).
\end{align*}\label{app_lem:2_step2}
\end{lemma}
\begin{proof}
For ease of presentation, define 
$\tilde{l}_t(h)=l_t(a_{t,h})$ for $h\in[H]$. Then
\begin{align}
\sum_{t=1}^{T}\mathbb{E}\!\left[l_t(a_{t,h_t})-l_t(a_{t,h^\dagger})\right]
&=\sum_{t=1}^{T}\mathbb{E}\!\left[\langle 
\textit{\textbf{p}}_t-\textit{\textbf{e}}_{h^\dagger,H},\,\tilde{\textit{\textbf{l}}}_t\rangle\right]\notag\\
&\le\max_{\textit{\textbf{p}}\in\mathcal{P}_H^\alpha}\mathbb{E}\!\left[
   \sum_{t=1}^{T}\!\langle 
   \textit{\textbf{p}}-\textit{\textbf{e}}_{h^\dagger,H},
   \tilde{\textit{\textbf{l}}}_t\rangle
   +
   \sum_{t=1}^{T}\!\langle 
   \textit{\textbf{p}}_t-\textit{\textbf{p}},
   \tilde{\textit{\textbf{l}}}_t\rangle
   \right]\notag\\
&\le \alpha T(H-1)
+\max_{\textit{\textbf{p}}\in\mathcal{P}_H^\alpha}\mathbb{E}\!\left[
     \sum_{t=1}^{T}\!\langle 
     \textit{\textbf{p}}_t-\textit{\textbf{p}},
     \tilde{\textit{\textbf{l}}}_t\rangle
     \right]\\&= \alpha T(H-1)
+\max_{\textit{\textbf{p}}\in\mathcal{P}_H^\alpha}\mathbb{E}\!\left[
     \sum_{t=1}^{T}\!\langle 
     \textit{\textbf{p}}_t-\textit{\textbf{p}},
     {\textit{\textbf{l}}}_t'\rangle
     \right].
\label{eq:lemmaa4_1}
\end{align}
\textbf{Bounding the mirror-descent term.}
We next bound 
$\mathbb{E}\!\left[
\sum_t\langle
\textit{\textbf{p}}_t-\textit{\textbf{p}},
{\textit{\textbf{l}}}_t'\rangle\right]$
using the OMD analysis of 
\citet[Lemma 13]{agarwal2017corralling}.
The master update is
\[
\textit{\textbf{p}}_{t+1}
=\arg\min_{\textit{\textbf{p}}\in\mathcal{P}_H^\alpha}
\!\left\{
\langle \textit{\textbf{p}},\textit{\textbf{l}}_t^\prime\rangle
+D_{G_{\boldsymbol{\eta}_t}}(\textit{\textbf{p}},\textit{\textbf{p}}_t)
\right\},
\]
where 
$G_{\boldsymbol{\eta}_t}(\textit{\textbf{p}})
=\sum_{h=1}^H \frac{1}{\eta_t(h)}\,p(h)\log p(h)$ 
and $D_{G_{\boldsymbol{\eta}_t}}$ 
is the corresponding Bregman divergence.
Applying the standard mirror-descent inequality,
for any $\textit{\textbf{p}}\in\mathcal{P}_H^\alpha$,
\begin{equation}
\label{eq:omd-onestep}
\langle\textit{\textbf{p}}_t-\textit{\textbf{p}},\,
\tilde{\textit{\textbf{l}}}_t\rangle
\le
D_{G_{\boldsymbol{\eta}_t}}
(\textit{\textbf{p}},\textit{\textbf{p}}_t)
-D_{G_{\boldsymbol{\eta}_t}}
(\textit{\textbf{p}},\textit{\textbf{p}}_{t+1})
+\sum_{h=1}^H\eta_t(h)\,p_t(h)^2\,{l}_t'(h)^2.
\end{equation}
Summing over $t=1,\dots,T$ gives
\begin{align}
\sum_{t=1}^{T}
\langle\textit{\textbf{p}}_t-\textit{\textbf{p}},
\tilde{\textit{\textbf{l}}}_t\rangle
&\le
\sum_{t=1}^T\!\Big(
D_{G_{\boldsymbol{\eta}_t}}
(\textit{\textbf{p}},\textit{\textbf{p}}_t)
-D_{G_{\boldsymbol{\eta}_t}}
(\textit{\textbf{p}},\textit{\textbf{p}}_{t+1})
\Big)
+\sum_{t=1}^T\sum_{h=1}^H
\eta_t(h)\,p_t(h)^2\,{l}_t'(h)^2.
\label{eq:md-sum}
\end{align}
\medskip\noindent
\textbf{Bounding the potential differences.}
We first control the telescoping term in \eqref{eq:md-sum}.
Since $D_{G_{\boldsymbol{\eta}_t}}(\cdot,\cdot)\ge 0$ and the learning
rates $\eta_t(h)$ are nondecreasing in $t$ for each $h$, we have
\begin{align}
\sum_{t=1}^T\!\Big(
D_{G_{\boldsymbol{\eta}_t}}(\textit{\textbf{p}},\textit{\textbf{p}}_t)
-D_{G_{\boldsymbol{\eta}_t}}(\textit{\textbf{p}},\textit{\textbf{p}}_{t+1})
\Big)
&\le
D_{G_{\boldsymbol{\eta}_1}}(\textit{\textbf{p}},\textit{\textbf{p}}_1)
\;+\;
\sum_{t=1}^{T-1}\!\sum_{h=1}^H
\Bigl(\tfrac{1}{\eta_{t+1}(h)}-\tfrac{1}{\eta_t(h)}\Bigr)
\,h\!\left(\tfrac{p(h)}{p_{t+1}(h)}\right),
\label{eq:pot-sum-general}
\end{align}
where $h(y)=y-1-\log y \ge 0$ is the log-barrier Bregman core.
For the initial term, using that $G_{\boldsymbol{\eta}_1}$ is (scaled)
negative entropy on the clipped simplex $\mathcal{P}_H^\alpha$ with
$\alpha=1/(TH)$, we obtain the standard bound
\begin{equation}
\label{eq:initial-potential}
\max_{\textit{\textbf{p}}\in \mathcal{P}_H^\alpha}
D_{G_{\boldsymbol{\eta}_1}}(\textit{\textbf{p}},\textit{\textbf{p}}_1)
\;=\;O\!\left(\frac{H\log(1/\alpha)}{\eta}\right)
\;=\;O\!\left(\frac{H\log T}{\eta}\right).
\end{equation}
\textbf{Adaptive-rate gain (negative correction).}
By the adaptive schedule in Algorithm~\ref{alg:alg2},
if $\frac{1}{p_{t+1}(h)}>\rho_t(h)$, then
$\rho_{t+1}(h)=\frac{2}{p_{t+1}(h)}$ and
$\eta_{t+1}(h)=\gamma\,\eta_t(h)$ with $\gamma=e^{1/\log T}$,
while otherwise $\eta_{t+1}(h)=\eta_t(h)$.
As in \cite[ Lemma~13]{agarwal2017corralling}, this implies that whenever
the coordinate $h^\dagger$ is assigned too little probability, the factor
$\bigl(\tfrac{1}{\eta_{t+1}(h^\dagger)}-\tfrac{1}{\eta_t(h^\dagger)}\bigr)$
is negative of order $-1/(\eta\log T)$, and it multiplies the nonnegative
barrier increment $h\!\left(\tfrac{p(h^\dagger)}{p_{t+1}(h^\dagger)}\right)$ where $h(y)=y-1-\log(y)$.
Aggregating these events over $t=1,\dots,T-1$ yields
\begin{equation}
\label{eq:neg-term}
\sum_{t=1}^{T-1}\!\sum_{h=1}^H
\Bigl(\tfrac{1}{\eta_{t+1}(h)}-\tfrac{1}{\eta_t(h)}\Bigr)
\,h\!\left(\tfrac{p(h)}{p_{t+1}(h)}\right)\le\sum_{t=1}^{T-1}\!\Bigl(\tfrac{1}{\eta_{t+1}(h^\dagger)}-\tfrac{1}{\eta_t(h^\dagger)}\Bigr)
\,h\!\left(\tfrac{p(h^\dagger)}{p_{t+1}(h^\dagger)}\right)
\;\le\; -\,\frac{\rho_T(h^\dagger)}{40\,\eta\log T},
\end{equation}
where $\rho_T(h^\dagger)$ is the final density parameter maintained by the
schedule.
Combining \eqref{eq:pot-sum-general}, \eqref{eq:initial-potential}, and
\eqref{eq:neg-term}, and for $\textit{\textbf{p}}\in\mathcal{P}_H^\alpha$,
we obtain
\begin{equation}
\label{eq:pot-final}
\sum_{t=1}^T\!\Big(
D_{G_{\boldsymbol{\eta}_t}}(\textit{\textbf{p}},\textit{\textbf{p}}_t)
-D_{G_{\boldsymbol{\eta}_t}}(\textit{\textbf{p}},\textit{\textbf{p}}_{t+1})
\Big)
\;\le\;
O\!\left(\frac{H\log T}{\eta}\right)
\;-\;\frac{\rho_T(h^\dagger)}{40\,\eta\log T}.
\end{equation}
\textbf{Bounding the variance term.}
It remains to bound
$\sum_{t=1}^T\sum_{h=1}^H \eta_t(h)\,p_t(h)^2\,{l}_t'(h)^2$.
Recall that ${l}_t'(h)\in[0,1/p_t(h)]$ and only the sampled coordinate
can be nonzero. Since each increase at least doubles the density $\rho_t(h)$ and $\rho_t(h)\le 2TH$ from $p_t(h)\ge \alpha=1/TH$, the number of entire updates for each $h$ is at most $C_1\log (HT)$ for a constant $C_1>0$. This implies that $\eta_t(h)\le \eta_T(h)\le \eta\gamma^{C_1\log(2HT)}\le \eta e^{C_2}$ for a constant $C_2>0$. Therefore 
\begin{equation}
\label{eq:var-bound}
\sum_{t=1}^T\sum_{h=1}^H \eta_t(h)\,p_t(h)^2\,{l}_t'(h)^2=\sum_{t=1}^T \eta_t(h_t)\,p_t(h_t)^2\,{l}_t'(h_t)^2
\;\le\;T\eta_T(h)
\;=\; O(T\eta).
\end{equation}
\textbf{Putting the pieces together.}
Apply \eqref{eq:pot-final} and \eqref{eq:var-bound} to \eqref{eq:md-sum}, then
maximize over $\textit{\textbf{p}}\in\mathcal{P}_H^\alpha$ and take expectations.
Combining with \eqref{eq:lemmaa4_1} yields
\[
\sum_{t=1}^T\mathbb{E}\!\left[l_t(a_{t,h_t})\right]
-\sum_{t=1}^T\mathbb{E}\!\left[l_t(a_{t,h^\dagger})\right]
\;\le\;
O\!\left(\frac{H\log T}{\eta}\right)
\;+\;O(T\eta)
\;-\;\mathbb{E}\!\left[\frac{\rho_T(h^\dagger)}{40\,\eta\log T}\right]
\;+\;\alpha T(H-1),
\]
which is the desired bound.
\end{proof}

The negative bias term in Lemma~ \ref{app_lem:2_step2} is derived from the log-barrier regularizer and increasing learning rates $\eta_t(h)$. This term is critical to bound the worst case regret which will be shown soon. Also, $H\log(T)/\eta$ is obtained from $H\log(1/(H\alpha))/\eta$ considering the clipped domain.
Then, putting \eqref{app_eq:2_decom} and Lemmas~ \ref{app_lem:regret_2_step1} and \ref{app_lem:2_step2} altogether, we have
\begin{align}
    R_S(T) &=\sum_{t=1}^T\mathbb{E}\left[l_t(a_t)\right]-\sum_{s=0}^S\min_{1\le k_s\le K}\sum_{t=T_s}^{T_{s+1}-1}l_t(k_s)\cr 
    &\le O\left(\frac{H\log T}{\eta}+T\eta\right)-\mathbb{E}\left[\frac{\rho_T(h^\dagger)}{40\eta \log T}\right]\cr & \quad+\alpha T(H-1)+\beta T(K-1)\cr &\quad+\mathbb{E}\left[2S\log(1/\beta)\sqrt{\frac{KT\rho_T(h^\dagger)}{h^\dagger}}+\frac{1}{2} \sqrt{SKT\rho_T(h^\dagger)}\right]\cr
    &=\tilde{O}\left(\mathbb{E}\left[\sqrt{SKT\rho_T(h^\dagger)}\right]\right)-\mathbb{E}\left[\frac{\rho_T(h^\dagger)\sqrt{TK}}{40\sqrt{H}\log(T)}\right],\label{app_eq:2_step3}
\end{align}
where $\alpha=1/(TH)$, $\beta=1/(TK)$, $\eta=\sqrt{H/T}$, $H=\log(T)$, and $h^\dagger=\Theta(S).$  Then we can obtain
\begin{align*}
    R_S(T)=\tilde{O}\left(\min\left\{\sqrt{SKT\rho},S\sqrt{KT}\right\}\right),
\end{align*}
where $\tilde{O}(S\sqrt{KT})$ is obtained from the worst case of $\rho_T(h^\dagger)$. The worst case can be found by considering a maximum value of the concave bound of the last equality in \eqref{app_eq:2_step3} with variable $\rho_T(h^\dagger)>0$ such that $\rho_T(h^\dagger)=\tilde{\Theta}(S)$. This concludes the proof.

\end{document}